\documentclass[10pt,a4paper,twocolumn]{article}

\usepackage{dcpreprint}

\usepackage{textcomp}
\usepackage{eso-pic}
\newcommand{\ieeesubmissionnotice}{%
  \begingroup
  \setlength{\fboxsep}{7pt}%
  \setlength{\fboxrule}{0.4pt}%
  \fbox{\begin{minipage}{\dimexpr\textwidth-2\fboxsep-2\fboxrule\relax}
    \footnotesize\raggedright
    This work has been submitted to the IEEE for possible publication. Copyright may be transferred without notice, after which this version may no longer be accessible.
  \end{minipage}}%
  \endgroup
}
\fancypagestyle{dcfirst}{%
  \fancyhf{}%
  \fancyfoot[C]{\footnotesize\thepage}%
  }

\allowdisplaybreaks

\graphicspath{{figures/}{./figures/}{./}}

\newcommand{\figplaceholder}[3]{%
  \setlength{\fboxsep}{0pt}%
  \fcolorbox{black!35}{black!4}{%
    \begin{minipage}[c][#2][c]{\dimexpr#1-2\fboxrule\relax}%
      \centering\color{black!62}%
      {\footnotesize\bfseries [\,figure placeholder\,]}\\[3pt]
      {\scriptsize\ttfamily #3}%
    \end{minipage}}%
}
\newcommand{\incfig}[4][\columnwidth]{%
  \IfFileExists{#2}{\includegraphics[width=#1]{#2}}%
                   {\message{^^J*** MISSING FIGURE FILE: #2 -- grey placeholder box used ***^^J}%
                    \figplaceholder{#1}{#3}{#4}}%
}

\newtheorem{theorem}{Theorem}
\newtheorem{lemma}{Lemma}
\newtheorem{corollary}{Corollary}

\theoremstyle{definition}

\newtheorem{remark}{Remark}

\newcommand{\R}{\mathbb{R}}
\newcommand{\Sym}{\operatorname{Sym}}
\newcommand{\col}{\operatorname{col}}
\newcommand{\diag}{\operatorname{diag}}
\newcommand{\tp}{\top}
\newcommand{\lmin}{\ell^{-}}
\newcommand{\lmax}{\ell^{+}}
\newcommand{\Lm}{L^{-}}
\newcommand{\Lp}{L^{+}}

\newcommand{\lbar}{\bar\ell}
\newcommand{\taub}{\bar\tau}
\newcommand{\Xibf}{\Xi}
\newcommand{\Ac}{\mathcal{A}}
\newcommand{\Fc}[1]{\mathcal{F}_{#1}}
\newcommand{\Rc}[1]{\mathcal{R}_{#1}}
\newcommand{\Sc}{\mathcal{S}}

\newcommand{\Tc}{\mathcal{T}}

\hypersetup{
  pdftitle={Length-independent stability and L2-gain certificates for chains of delayed neural modules},
  pdfauthor={\"{O}mer Faruk \.{I}\c{s}canl\i},
  pdfkeywords={delayed neural networks, interconnected systems, L2 gain, linear matrix inequalities, Lyapunov-Krasovskii functionals, stability analysis, time-varying delays}
}

\title{Length-independent stability and $L_{2}$-gain certificates for chains of delayed neural modules}
\author{{\"O}mer Faruk {\.I}{\c{s}}canl{\i}}
\orcid{0009-0009-6006-7120}
\affil{Department of Computer Engineering, Istanbul University-Cerrahpa{\c{s}}a, Istanbul, T{\"u}rkiye}
\email{omerfarukiscanli@gmail.com}
\runningtitle{Length-independent certificates for chains of delayed neural modules}
\runningauthor{{\"O}.F. {\.I}{\c{s}}canl{\i}}

\setabstract{%
Networks of repeated neural modules with delayed neighbour coupling need
stability guarantees for every number of modules, but delay-dependent
criteria certify one network at a time and grow with its size. This paper
derives stability and $L_{2}$-gain certificates of length-independent size
for open chains and rings of modules of two alternating types with
slope-restricted activations. A local and a transmission delay act
independently and are only bounded and continuous. A Lyapunov--Krasovskii
functional with fixed integration limits and module-local variables that
repeat with the module type reduces the certificate of every chain to one
cell of two modules. Local conditions and a single linear matrix inequality
of order 96 on this cell, or 104 for the gain, certify global asymptotic
stability and a common $L_{2}$-gain bound for all open chains and all rings
with an even number of at least four modules. Up to a margin, the stability
inequality is also necessary for the repeated variables to certify every
length. All uniform certificates, including certified coupling regions,
are verified in exact rational arithmetic. In the numerical study, the
certified coupling reaches 92\% of an outer bound given by unstable members
of the model class for short delays and 79\% at the nominal delay bounds.
For 16 modules at unit coupling strength, certificates of the same
functional computed for that length alone raise the certified delay scale
by 2\% to 7\% in four cases. A single stability problem with 1298 scalar
variables thus replaces per-length certification of long chains at little
cost.}

\keywords{delayed neural networks \kwsep interconnected systems \kwsep $L_{2}$ gain \kwsep linear matrix inequalities \kwsep Lyapunov--Krasovskii functionals \kwsep stability analysis \kwsep time-varying delays}

\begin{document}

\AddToShipoutPictureFG*{%
  \AtPageUpperLeft{%
    \put(\LenToUnit{18mm},-\LenToUnit{22pt}){%
      \makebox(0,0)[lt]{\ieeesubmissionnotice}%
    }%
  }%
}
\maketitle

\section{Introduction}
\label{sec:intro}
Arrays of coupled neural networks consist of identical networks that
interact through coupling matrices \citep{zheng2018stochastic}. Spatially
interconnected systems likewise couple identical subsystems with their
neighbours \citep{wang2020asymptotic}. In hardware implementations of
neural networks, delays arise from the finite switching speed of amplifiers
and from communication times, and they can cause oscillation and
instability \citep{zhang2003globally}. This paper considers
chains and rings of delayed neural modules of two alternating types. One
delay acts in the local feedback of each module and another in the signals
exchanged between neighbours. Both delays are bounded and continuous,
whereas their rates of change and their relation are unknown. If the
number of modules is a design parameter or is not known in advance,
stability and an $L_{2}$-gain bound are needed for every chain length
rather than for one.

Delay-dependent criteria for delayed neural networks are typically derived
from Lyapunov--Krasovskii functionals (LKFs) and stated as linear matrix
inequalities (LMIs) \citep{wang2014review}. Richer functionals enlarge the
admissible delay bounds but add decision variables \citep{chen2019improved}.
Criteria are available both for delays with a bounded rate of change and
for delays without rate information \citep{zhai2023improved}. Each of these
criteria certifies one network, and its variables are matrices of the
network dimension. For chains of the modules considered here, the
order-one conditions of \citet{zhai2025hierarchical} reach 155648 scalar
variables and an LMI of order 608 for 16 modules (Section~\ref{ssec:zhai}),
and a certificate for one length implies nothing for another. Delay-independent criteria constrain the connection matrices
independently of the delay \citep{arik2002analysis}, so they give the same
answer for short and for long delays.

For linear spatially interconnected systems with delays, conditions that
exploit the repetition of identical units are known
\citep{wang2020asymptotic}. Synchronization conditions for arrays of
identical neural networks exploit it as well \citep{zheng2018stochastic}
(Section~\ref{ssec:rw-identical}). They do not cover chains in which
slope-restricted activations, two delays without rate bounds and a
coupling through delayed activations occur together. A further question
concerns the assessment of a certificate. Conservatism is usually measured
by comparing the admissible delay bounds of different criteria
\citep{zhai2023improved}. Such comparisons rank criteria, but they do not
show how far a certified set is from what any certificate for the model
class could achieve.

This paper derives stability and $L_{2}$-gain certificates whose size does
not depend on the number of modules. A rate-free LKF with fixed
integration limits gives a network LMI that depends on the delay bounds
but not on the rates of change of the delays. Its variables are restricted
to module-local blocks that repeat with the module type. In the coordinates
of cells of two modules, the certificate matrix of every chain is then
block tridiagonal with the same two cell blocks, and open chains add only
a boundary term of favourable sign. One LMI on a single cell therefore
certifies every open chain and every ring with an even number of at least
four modules. Up to a margin, this LMI is equivalent to the requirement
that the repeated variables certify every chain length, so the reduction to
one cell adds no conservatism beyond the restriction of the variables.
Since the certificate matrix is convex in the coupling strength,
certificates at two coupling values that share the type variables cover the
interval between them. Every certificate for all lengths is verified in
exact rational arithmetic, so it holds independently of solver tolerances.
Unstable linear members of the model class bound from above the largest
coupling up to which every member is stable, and lower bounds on the
worst-case gain show how far the certified gains can be from it.

The main contributions are as follows.
\begin{enumerate}
\item A size-uniform certificate (Theorem~\ref{thm:uniform}) for global
asymptotic stability and a common $L_{2}$-gain bound of all open chains
with $N\ge 1$ modules and all rings with even $N\ge 4$, under two
independent delays without rate bounds. Besides local conditions, it
consists of one LMI of order 96, or 104 for the gain, and it extends to
module-wise delays (Remark~\ref{rem:modulewise}) and to constant inputs
(Remark~\ref{rem:input}).
\item An exactness result for the cell LMI (Lemma~\ref{lem:exact}) and
certified regions over intervals of coupling strength and delay scale
(Corollary~\ref{cor:region}).
\item An assessment against outer bounds and against certificates
computed for single lengths. For short delays, the certified coupling
reaches 92\% of an outer bound obtained from unstable members of the
class, and 79\% at the nominal delay bounds. For three instances at the
nominal bounds, the certified gains exceed lower bounds on the worst-case
gain by at most 50\%. For 16 modules and unit coupling strength,
certificates of the same functional computed for that length alone raise
the certified delay scale by 2\% to 7\% for four sets of type data,
whereas the uniform stability problem keeps 1298 scalar variables for
every length.
\end{enumerate}

Section~\ref{sec:related} reviews related work, and
Section~\ref{sec:problem} states the problem. Section~\ref{sec:method}
derives the certificates and the bounds used to assess them,
Section~\ref{sec:results} reports the numerical results, and
Section~\ref{sec:conclusion} concludes. The Appendix contains the proofs.

\section{Related work}
\label{sec:related}

\subsection{Delay-dependent criteria}
\label{ssec:rw-lkf}
The conservatism of an LKF criterion stems from the functional and from
the bounds on the integral terms of its derivative \citep{wang2014review}.
The Wirtinger-based inequality \citep{seuret2013wirtinger} tightens Jensen's
bound on these terms. Auxiliary-function-based inequalities contain it as a
special case and tighten it further \citep{park2015auxiliary}. The
reciprocally convex combination \citep{park2011reciprocally} joins the
bounds on the two parts of the delay interval. High-degree reciprocally
convex inequalities include earlier ones as special cases and sharpen this
step \citep{ying2026highdegree}. Criteria whose functionals have
integration limits that depend on $\tau(t)$ assume a bound on $\dot\tau$,
as in \citet{chen2019improved}. Criteria for delays without rate information
omit the terms with such limits \citep{zhai2023improved}. For this case,
hierarchical functionals with multiple-integral states admit integral
inequalities of higher order \citep{zhai2025hierarchical}. Slope bounds of
either sign cover activations that are neither monotonic nor bounded
\citep{zhu2008stability}, such as the SiLU of Lemma~\ref{lem:silu}. The
functional of Section~\ref{ssec:lkf} belongs to this family. It has fixed
integration limits, uses the Wirtinger and reciprocally convex bounds and
includes sector terms on the differences between activations at two
instants, as in \citet{zhai2025hierarchical}.

\subsection{Delay-independent criteria}
\label{ssec:rw-indep}
The M-matrix criteria of \citet{cao2003global} compare the self-feedback
rates with the absolute values of the weights and require delay
derivatives that are nonpositive or bounded by a constant below one. The
M-matrix test of \citet{zhang2003globally} requires only bounded delays and
yields global exponential stability. Both depend on the weights only
through their absolute values, so they cannot exploit the sign structure
of a coupling. For a chain, the positive vector of the test of
\citet{zhang2003globally} can be chosen periodic over the cells, and one
test then covers every length (Section~\ref{ssec:assess}).
Section~\ref{sec:results} uses it as a baseline.

\subsection{Networks of identical subsystems}
\label{ssec:rw-identical}
For arrays of identical neural networks with diffusive coupling,
Kronecker-product identities express the synchronization conditions
through LMIs of the size of one network. One LMI is imposed for each pair
of networks, and its coefficients contain the number of networks and the
coupling weights \citep{zheng2018stochastic}. Spatially interconnected
systems describe identical subsystems that interact with their neighbours
through spatial shift operators. For linear subsystems on infinite or
periodic arrays, LMIs of the size of one subsystem give sufficient
conditions for analysis and distributed control with an induced-norm
performance measure \citep{dandrea2003distributed}.
For linear subsystems with a constant state delay, LMIs in the matrices of
one subsystem certify well-posedness, asymptotic stability and
contractiveness \citep{wang2020asymptotic}. A related LMI condition
addresses distributed control with a time-varying delay whose derivative
is bounded by a constant below one \citep{wang2023distributed}. The cell
LMI of Section~\ref{ssec:uniform} is likewise posed on one subsystem, here
a cell of two modules. It covers open chains of every length and rings
with slope-restricted activations and two delays without rate bounds.

\subsection{Quadratic constraints and contraction}
\label{ssec:rw-qc}
Quadratic constraints describe the activations in LMI conditions for
feedback systems with neural network controllers, and integral quadratic
constraints extend them to plants with time delays and refine them with
the slope restrictions of the activations \citep{yin2022stability}. A
complete set of quadratic constraints bounds the repeated ReLU as tightly
as quadratic forms allow and can yield less conservative stability and
performance bounds for recurrent networks than earlier sets
\citep{noori2026complete}. Contraction of recurrent networks in weighted
$\ell_{1}$ and $\ell_{\infty}$ norms is certified by linear programs or by
a Hurwitz condition on the Metzler majorant of the synaptic matrix
\citep{davydov2022non}. Recurrent equilibrium networks are parametrized so
that every member is contracting and satisfies prescribed incremental
integral quadratic constraints \citep{revay2024ren}. The neural networks in
these works have no internal delays, and each analysis concerns one
network of fixed dimension. The sector terms of Section~\ref{ssec:lkf} are quadratic
constraints of this kind. They act at three instants and on the
differences between them, and an LKF accounts for the delays.

\section{Problem formulation}
\label{sec:problem}

\emph{Notation:} $\Sym(M):=M+M^{\tp}$, $\col(\cdot)$ stacks its
arguments, $\diag(\cdot)$ is block diagonal, $\otimes$ is the Kronecker
product, $\star$ marks blocks implied by symmetry, and $\|\cdot\|_{2}$ is
the norm of $L_{2}[0,\infty)$. For square blocks $H_{0}=H_{0}^{\tp}$ and
$H_{1}$, $\Tc_{K}(H_{0},H_{1})$ is block tridiagonal with $K$ diagonal
blocks $H_{0}$, superdiagonal blocks $H_{1}$ and subdiagonal blocks
$H_{1}^{\tp}$, and $\mathcal{C}_{K}(H_{0},H_{1})$ adds $H_{1}$ to its block
$(K,1)$ and $H_{1}^{\tp}$ to its block $(1,K)$.

\subsection{Chains of delayed neural modules}
\label{ssec:model}
A network consists of $N$ modules with two states each. Module $i$ has a
type $t(i)\in\{1,2\}$, and neighbouring modules have different types.
With $y=\col(y^{(1)},\dots,y^{(N)})\in\R^{n}$ and $n=2N$, the network is
\begin{equation}
\label{eq:model}
\begin{aligned}
\dot y(t)={}&-Dy(t)+Ag\bigl(y(t)\bigr)+B_{1}g\bigl(y(t-\tau_{1}(t))\bigr)\\
&+B_{2}g\bigl(y(t-\tau_{2}(t))\bigr)+w(t),\qquad z(t)=y(t),
\end{aligned}
\end{equation}
where $g(y)=\col(g_{1}(y_{1}),\dots,g_{n}(y_{n}))$, $w$ is an exogenous
input and $z$ is the performance output. The matrices $D=\diag(D_{t(i)})\succ 0$,
$A=\diag(A_{t(i)})$ and $B_{1}=\diag(B_{1,t(i)})$ are module-local, with
the data $D_{t}$ (diagonal), $A_{t}$ and $B_{1,t}\in\R^{2\times 2}$ of
type $t$. The matrix $B_{2}$ couples neighbours. Its $2\times 2$ block
$(i,i+1)$ is $F_{f}$, its block $(i+1,i)$ is $F_{b}$, and all its other
blocks vanish. An open chain has the neighbour pairs $(i,i+1)$,
$i=1,\dots,N-1$, and a ring with even $N\ge 4$ has the additional pair
$(N,1)$. Thus $\tau_{1}$ delays the local feedback and $\tau_{2}$ the
transmission between neighbours (Fig.~\ref{fig:chain}). For fixed type
data and couplings $(F_{f},F_{b})$, the family $\mathcal{N}$ consists of
all open chains with $N\ge 1$ modules that start with either type and of
all rings with even $N\ge 4$. Section~\ref{sec:results} uses
$(F_{f},F_{b})=(\kappa\hat G,-\kappa\hat G^{\tp})$ with a coupling
strength $\kappa\ge 0$ and a fixed block $\hat G$, and the pattern
$(\hat G,-\hat G)$.

The analysis rests on three assumptions.

\emph{(H1)} $g_{i}(0)=0$ and $\lmin\le(g_{i}(a)-g_{i}(b))/(a-b)\le\lmax$
for all $a\ne b$ and $i=1,\dots,n$, where $\lmin<\lmax$. We write $\Lm=\lmin I$, $\Lp=\lmax I$ and
$\lbar=\max\{|\lmin|,|\lmax|\}$.

\emph{(H2)} For $k=1,2$, $\tau_{k}:[0,\infty)\to[0,\taub_{k}]$ is
continuous. Neither differentiability nor a bound on the rate of change
nor a relation between $\tau_{1}$ and $\tau_{2}$ is assumed.

\emph{(H3)} $y(\theta)=\phi(\theta)$ for $\theta\in[-h,0]$, where
$h=\max_{k}\taub_{k}$ and $\phi\in C([-h,0],\R^{n})$ is arbitrary.

\begin{lemma}
\label{lem:silu}
The SiLU $f(s)=s/(1+e^{-s})$ \citep{elfwing2018sigmoid} satisfies (H1),
with $g_{i}=f$, for $\lmin=-1/10$ and $\lmax=11/10$.
\end{lemma}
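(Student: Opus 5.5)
Since $f$ is continuously differentiable, the mean value theorem gives $(f(a)-f(b))/(a-b)=f'(\xi)$ for some $\xi$ between $a$ and $b$. It therefore suffices to show $f(0)=0$, which is immediate, and $-1/10\le f'(s)\le 11/10$ for all $s\in\R$. Writing $\sigma(s)=1/(1+e^{-s})$, we have $f(s)=s\sigma(s)$ and
\begin{equation*}
f'(s)=\sigma(s)\bigl(1+s(1-\sigma(s))\bigr).
\end{equation*}
The identity $\sigma(-s)=1-\sigma(s)$ gives $f'(s)+f'(-s)=1$, so $f'$ is symmetric about $1/2$. Hence the upper bound $f'(s)\le 11/10$ is equivalent to the lower bound $f'(-s)\ge -1/10$, and only the lower bound needs proof.

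For the lower bound I would argue as follows. For $s\ge -1$ we have $1+s(1-\sigma)\ge 1+s\ge 0$, because $0<1-\sigma<1$. Thus $f'\ge 0$ there, and only $s<-1$ remains. On that range, set $u=-s>1$ and write $f'(-u)=\sigma(-u)\bigl(1-u\,\sigma(u)\bigr)$. Using $\sigma(-u)=e^{-u}\sigma(u)\le e^{-u}$ and $u\sigma(u)-1\le u-1$, we get
\begin{equation*}
|f'(-u)|\le (u-1)e^{-u},\qquad u>1.
\end{equation*}
The right-hand side is maximised at $u=2$, where it equals $e^{-2}\approx 0.1353$. This crude bound exceeds $1/10$, so it is not enough.

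This is the main obstacle: the true minimum of $f'$ is about $-0.0998$, only just above $-1/10$. The bound must therefore be nearly sharp, and I would not use a crude estimate. Instead I would compute exactly with
\begin{equation*}
f''(s)=\sigma(1-\sigma)\bigl(2+s(1-2\sigma)\bigr).
\end{equation*}
The critical points of $f'$ on $s<0$ are the roots of $2+s\tanh(-s/2)=0$, that is, $s\tanh(s/2)=2$. This has a unique root $s^{*}\approx -2.3994$. Uniqueness holds because $s\tanh(s/2)$ is even, increasing in $|s|$, and runs from $0$ to $\infty$. It follows that $f'$ decreases on $(-\infty,s^{*}]$ from its limit $0$ at $-\infty$ and increases on $[s^{*},0]$, so $\min f'=f'(s^{*})$.

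To make the final inequality rigorous without floating-point trust, I would enclose $s^{*}$ in a short rational interval $[s_{1},s_{2}]$. The enclosure is certified by the sign change of $2+s\tanh(-s/2)$, evaluated with rational bounds on the exponential. On this interval, monotonicity of the factors gives
\begin{equation*}
f'(s^{*})\ge \sigma(s_{1})\bigl(1+s_{1}(1-\sigma(s_{2}))\bigr),
\end{equation*}
where the bracket is negative. Rational upper and lower bounds on $e^{s_{i}}$ then show that this lower bound exceeds $-1/10$. Together with the symmetry $f'(s)+f'(-s)=1$, this gives $f'(s)\in[-1/10,\,11/10]$, which is (H1).
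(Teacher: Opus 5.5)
Up to the final estimate your argument is the paper's. You use the mean value theorem, the formula for $f'$, the symmetry $f'(s)+f'(-s)=1$ and the sign of $f''=\sigma(1-\sigma)(2-s\tanh(s/2))$, which gives the unique minimiser $s^{*}$ with $s^{*}\tanh(s^{*}/2)=2$. (In passing, $1+s(1-\sigma)\ge 1+s$ fails for $s>0$, but nonnegativity there is trivial.)

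The gap is in the one delicate step. For $s_{1}<s^{*}<s_{2}$, your displayed inequality $f'(s^{*})\ge\sigma(s_{1})\bigl(1+s_{1}(1-\sigma(s_{2}))\bigr)$ is false. Since $\sigma$ is increasing and the bracket is negative, two of your three substitutions go the wrong way. Replacing $\sigma(s^{*})$ by the smaller $\sigma(s_{1})$ raises the product, and replacing $1-\sigma(s^{*})$ by the smaller $1-\sigma(s_{2})$ raises the bracket. Only the explicit factor $s_{1}$ moves in the right direction. To first order at $s^{*}$, your expression exceeds $f'(s^{*})$ by about $0.015(s_{2}-s_{1})$. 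For $[s_{1},s_{2}]=[-2.41,-2.39]$ it is about $-0.09953$, whereas $f'(s^{*})\approx-0.09984$.

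A valid monotone bound is $\sigma(s_{2})\bigl(1+s_{1}(1-\sigma(s_{1}))\bigr)$. It lies about $0.09(s_{2}-s_{1})$ below $f'(s^{*})$, while the margin $f'(s^{*})+1/10$ is only about $1.6\cdot10^{-4}$. You would therefore need an enclosure of width below roughly $1.8\cdot10^{-3}$, plus correspondingly sharp rational bounds on the exponentials. The proposal asserts this computation but does not carry it out.

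The paper avoids the enclosure by evaluating $f'$ at the critical point in closed form. The equation $r\tanh(r/2)=2$ gives $\sigma(r)=(r+2)/(2r)$, hence $f'(r)=1/2+r/4$, and by symmetry $f'(-r)=1/2-r/4$. The claim is then equivalent to $r<12/5$. Because $s\tanh(s/2)$ increases for $s>0$, this is $\tanh(6/5)>5/6$, i.e.\ $e^{12/5}>11$, which follows from $\sum_{k=0}^{8}(12/5)^{k}/k!>11$.

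In your notation the same identity reads $f'(s^{*})=1/2+s^{*}/4$, which is increasing in $s^{*}$. So the one-sided bound $s^{*}>-12/5$ suffices. It is certified by the negative sign of $2-s\tanh(s/2)$ at $s=-12/5$, and no interval evaluation of $f'$ is needed.
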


\begin{proof}
With $\sigma(s)=1/(1+e^{-s})$, the derivative
$f'(s)=\sigma(s)[1+s(1-\sigma(s))]$ satisfies $f'(s)+f'(-s)=1$ and tends
to $0$ and $1$ as $s\to-\infty$ and $s\to\infty$. Since
$f''(s)=\sigma(s)(1-\sigma(s))(2-s\tanh(s/2))$ and $s\tanh(s/2)$ is even
and increasing for $s>0$, $f'$ attains its maximum at the root $r>0$ of
$r\tanh(r/2)=2$ and its minimum at $-r$. From $\sigma(r)=(r+2)/(2r)$
follows $f'(r)=1/2+r/4$. The bound $r<12/5$ is equivalent to
$\tanh(6/5)>5/6$, that is, to $e^{12/5}>11$, which holds because
$\sum_{k=0}^{8}(12/5)^{k}/k!>11$. Thus $\sup f'<11/10$ and
$\inf f'=1-\sup f'>-1/10$, and the mean value theorem completes the
proof.
\end{proof}

Only (H1) enters the analysis, so the results hold for any activations
with slopes in $[\lmin,\lmax]$, which may differ between neurons. With
$d\times d$ in place of $2\times 2$ blocks, the analysis also holds for $d$
states per module, with cell LMIs of the orders $48d$ and $52d$.

For given type data and couplings, we seek conditions whose size does
not depend on $N$ and which guarantee for every network in $\mathcal{N}$
(a) global asymptotic stability of the origin for $w\equiv 0$ under
(H1)--(H3) and (b) $\|z\|_{2}\le\gamma\|w\|_{2}$ for zero history, with
one $\gamma$ for the whole family. We further seek (c) sets of coupling
strengths and delay bounds over which (a) holds and (d) upper bounds on
the largest coupling strength up to which every member of the class
defined by (H1)--(H3) is stable.

\begin{figure}[!t]
\centering
\incfig[\columnwidth]{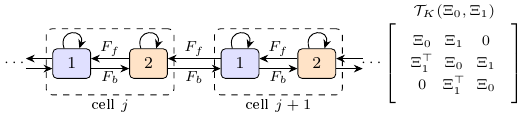}{2cm}{figures/fig\_chain.pdf}
\caption{Open chain of modules of types 1 and 2. Loops mark the local
feedback through $A_{t}$ and $B_{1,t}$, arrows the delayed couplings
$F_{f}$ and $F_{b}$, and dashed boxes the cells. In cell order, the
certificate matrix is block tridiagonal with the cell blocks $\Xibf_{0}$
and $\Xibf_{1}$ (Lemma~\ref{lem:cell}).}
\label{fig:chain}
\end{figure}

\section{Methodology}
\label{sec:method}

\subsection{Rate-free network certificate}
\label{ssec:lkf}
Fix a network in $\mathcal{N}$ and set $\tau_{0}\equiv 0$. For
$j=0,1,2$ and $k=1,2$, let $y_{j}=y(t-\tau_{j}(t))$, $g_{j}=g(y_{j})$ and
$\bar y_{k}=y(t-\taub_{k})$, and let $\nu_{k}$ and $\omega_{k}$ be the
averages of $y$ over $[t-\tau_{k}(t),t]$ and $[t-\taub_{k},t-\tau_{k}(t)]$,
set to $y(t)$ and $\bar y_{k}$ when these intervals are empty. The
augmented vector
\begin{equation}
\label{eq:xi}
\xi=\col(y_{0},y_{1},y_{2},\bar y_{1},\bar y_{2},g_{0},g_{1},g_{2},\nu_{1},\nu_{2},\omega_{1},\omega_{2})
\end{equation}
has the block selectors $e_{y,j},e_{\bar y,k},e_{g,j},e_{\nu,k},
e_{\omega,k}\in\R^{n\times 12n}$, so that, e.g., $y(t)=e_{y,0}\xi(t)$,
and \eqref{eq:model} reads $\dot y=\Ac\xi+w$ with
\begin{equation}
\label{eq:A}
\Ac=-De_{y,0}+Ae_{g,0}+B_{1}e_{g,1}+B_{2}e_{g,2}.
\end{equation}
We use the Lyapunov--Krasovskii functional
\begin{equation}
\label{eq:lkf}
\begin{aligned}
V(t)={}&y^{\tp}(t)Py(t)+2\sum_{i=1}^{n}\pi_{i}\int_{0}^{y_{i}(t)}g_{i}(s)\,ds\\
&+\sum_{k=1}^{2}\int_{t-\taub_{k}}^{t}y^{\tp}(s)S_{k}y(s)\,ds\\
&+\sum_{k=1}^{2}\taub_{k}\int_{-\taub_{k}}^{0}\int_{t+\theta}^{t}\dot y^{\tp}(s)R_{k}\dot y(s)\,ds\,d\theta
\end{aligned}
\end{equation}
with $P=P^{\tp}$, $\Pi=\diag(\pi_{1},\dots,\pi_{n})\succeq 0$,
$S_{k}\succeq 0$ and $R_{k}\succeq 0$. No integration limit in
\eqref{eq:lkf} depends on $\tau_{k}(t)$. The derivative of $V$ therefore
involves the delays only through the delayed states in $\dot y$, and its
bound below uses only the values $\tau_{k}(t)$, whereas a term
$\int_{t-\tau_{k}(t)}^{t}(\cdot)\,ds$ would introduce $\dot\tau_{k}$.

Let $X_{k}\in\R^{2n\times 2n}$, $\hat R_{k}=\diag(R_{k},3R_{k})$,
$\Rc{k}=\bigl[\begin{smallmatrix}\hat R_{k}&X_{k}\\ \star&\hat R_{k}
\end{smallmatrix}\bigr]$ and
\begin{equation}
\label{eq:Fk}
\begin{aligned}
\Fc{k}=\col\bigl(&e_{y,0}-e_{y,k},\;e_{y,0}+e_{y,k}-2e_{\nu,k},\\
&e_{y,k}-e_{\bar y,k},\;e_{y,k}+e_{\bar y,k}-2e_{\omega,k}\bigr).
\end{aligned}
\end{equation}
For $Y,G\in\R^{n\times 12n}$ and a diagonal $T\succeq 0$, let
\begin{equation}
\label{eq:sector}
\Sc(Y,G;T)=\tfrac{1}{2}\Sym\bigl((G-\Lm Y)^{\tp}T(\Lp Y-G)\bigr).
\end{equation}
By (H1), $\xi^{\tp}\Sc(Y,G;T)\xi\ge 0$ whenever $G\xi=g(Y\xi)$, and also when $Y\xi$ and $G\xi$ are
differences of two such pairs. We use \eqref{eq:sector} at the three
instants $t-\tau_{j}(t)$ and, as in \citet{zhai2025hierarchical}, for the
differences between each pair of them. With $W=e_{y,0}^{\tp}P+
e_{g,0}^{\tp}\Pi$, $\tilde R=\taub_{1}^{2}R_{1}+\taub_{2}^{2}R_{2}$,
diagonal multipliers $T_{j}\succeq 0$ and $T_{ab}\succeq 0$, and
\begin{equation}
\label{eq:L}
\begin{aligned}
L={}&\sum_{k=1}^{2}\bigl(e_{y,0}^{\tp}S_{k}e_{y,0}-e_{\bar y,k}^{\tp}
S_{k}e_{\bar y,k}-\Fc{k}^{\tp}\Rc{k}\Fc{k}\bigr)\\
&+\sum_{j=0}^{2}\Sc(e_{y,j},e_{g,j};T_{j})\\
&+\sum_{0\le a<b\le 2}\Sc(e_{y,a}-e_{y,b},e_{g,a}-e_{g,b};T_{ab}),
\end{aligned}
\end{equation}
the certificate matrices are
\begin{equation}
\label{eq:Xi}
\begin{aligned}
\Xibf&=L+\Sym(W\Ac)+\Ac^{\tp}\tilde R\Ac,\\
\tilde\Xibf(\gamma)&=\begin{bmatrix}\Xibf+e_{y,0}^{\tp}e_{y,0}&
W+\Ac^{\tp}\tilde R\\ \star&\tilde R-\gamma^{2}I\end{bmatrix}.
\end{aligned}
\end{equation}
Setting $T_{ab}=0$ removes the difference terms from \eqref{eq:L}, and
Section~\ref{sec:results} uses this case to measure their effect.

\begin{lemma}
\label{lem:vdot}
Let (H1) and (H2) hold, and let $\Pi$, $S_{k}$, $R_{k}$, $T_{j}$,
$T_{ab}$ and $\Rc{k}$ be positive semidefinite. For almost every $t$ such
that $y$ is absolutely continuous on $[t-h,t]$ with a square-integrable
derivative,
\begin{equation}
\label{eq:vdot}
\dot V\le\xi^{\tp}L\xi+2\xi^{\tp}W\dot y+\dot y^{\tp}\tilde R\dot y.
\end{equation}
Hence $\dot V\le\xi^{\tp}\Xibf\xi$ if $w=0$, and
$\dot V+z^{\tp}z-\gamma^{2}w^{\tp}w\le\tilde\xi^{\tp}\tilde\Xibf(\gamma)
\tilde\xi$ with $\tilde\xi=\col(\xi,w)$.
\end{lemma}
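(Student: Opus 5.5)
I will differentiate the four groups of terms in \eqref{eq:lkf} separately at a time $t$ where $y$ is differentiable and the integrands are Lebesgue points. Then I will add the nonnegative sector quantities of \eqref{eq:sector}. All of them are available almost everywhere under the stated regularity.

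\emph{Quadratic and integral-of-activation terms.} The first term gives $2y^{\tp}P\dot y$. For the second, $\frac{d}{dt}\int_{0}^{y_i}g_i(s)\,ds=g_i(y_i)\dot y_i$, because $g_i$ is continuous by (H1) and $y_i$ is absolutely continuous. Together these give $2\xi^{\tp}(e_{y,0}^{\tp}P+e_{g,0}^{\tp}\Pi)\dot y=2\xi^{\tp}W\dot y$.

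\emph{Single and double integrals.} The $S_k$ terms have fixed limits, so their derivative is $y^{\tp}S_ky-\bar y_k^{\tp}S_k\bar y_k$. This is exactly the $S_k$ part of $\xi^{\tp}L\xi$. For the $R_k$ terms, the standard computation gives
\[
\taub_k^{2}\dot y^{\tp}R_k\dot y-\taub_k\int_{t-\taub_k}^{t}\dot y^{\tp}R_k\dot y\,ds .
\]
The first parts sum to $\dot y^{\tp}\tilde R\dot y$. Here the square-integrability of $\dot y$ on $[t-h,t]$ is used.

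\emph{Bounding the integral term (main step).} This is the only place where $\tau_k(t)$ enters. Split $[t-\taub_k,t]$ at $t-\tau_k(t)$. On each piece, apply the Wirtinger inequality \citep{seuret2013wirtinger}: for an interval of length $\alpha>0$ with endpoints $a,b$ and average $\mu$,
\[
\int\dot y^{\tp}R\dot y\ge\tfrac{1}{\alpha}\bigl[(b-a)^{\tp}R(b-a)+3(b+a-2\mu)^{\tp}R(b+a-2\mu)\bigr].
\]
With $\alpha_k=\tau_k(t)/\taub_k$, this yields
\[
\taub_k\int\dot y^{\tp}R_k\dot y\ge\tfrac{1}{\alpha_k}\eta_1^{\tp}\hat R_k\eta_1+\tfrac{1}{1-\alpha_k}\eta_2^{\tp}\hat R_k\eta_2,
\]
where $\eta_1,\eta_2$ are the two halves of $\Fc{k}\xi$. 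Since $\Rc{k}\succeq0$, the reciprocally convex lemma \citep{park2011reciprocally} bounds the right side below by $\xi^{\tp}\Fc{k}^{\tp}\Rc{k}\Fc{k}\xi$.

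The degenerate cases need care, and I expect this bookkeeping to be the main obstacle. When $\tau_k(t)=0$ or $\tau_k(t)=\taub_k$, one interval is empty. The convention $\nu_k=y(t)$ or $\omega_k=\bar y_k$ then makes the corresponding $\eta$ vanish, so the bound still holds with the one nonempty interval. In this case the reciprocal-convexity step reduces to dropping the zero term, since the $X_k$ cross terms multiply $\eta=0$. Only continuity of $\tau_k$ and membership in $[0,\taub_k]$ are used; no derivative of $\tau_k$ appears.

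\emph{Assembly.} By (H1), each $\xi^{\tp}\Sc(\cdot)\xi$ term in $L$ is nonnegative. This holds at the instants $t-\tau_j(t)$ and for the pairwise differences, using the slope bound with $a=y_a$, $b=y_b$ componentwise and the diagonal $T\succeq0$. Adding these terms to the bound gives \eqref{eq:vdot}.

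For $w=0$, substitute $\dot y=\Ac\xi$. The right side then becomes $\xi^{\tp}(L+\Sym(W\Ac)+\Ac^{\tp}\tilde R\Ac)\xi=\xi^{\tp}\Xibf\xi$. For general $w$, substitute $\dot y=\Ac\xi+w$ and add $z^{\tp}z-\gamma^{2}w^{\tp}w$ with $z=e_{y,0}\xi$. Expanding the result and collecting terms in $\tilde\xi=\col(\xi,w)$ gives the blocks of $\tilde\Xibf(\gamma)$ in \eqref{eq:Xi}. This last step is routine.
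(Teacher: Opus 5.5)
Your proposal is correct and follows the paper's proof essentially step for step. Both differentiate the terms one by one, apply the Wirtinger bound on the two pieces of $[t-\taub_k,t]$, combine them with the reciprocally convex bound, use the fact that one half of $\Fc{k}\xi$ vanishes when $\tau_k(t)\in\{0,\taub_k\}$, and finally add the nonnegative sector terms. The only case you leave implicit is $\taub_k=0$, where your $\alpha_k$ is undefined and which the numerical study uses for $\taub_1$. The paper treats it separately: there the integral term is zero and $\Fc{k}\xi=0$, so the bound holds trivially.
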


\begin{theorem}
\label{thm:network}
Let (H1)--(H3) hold, and let $P=P^{\tp}$, $\Pi\succeq 0$,
$S_{k}\succeq 0$, $R_{k}\succeq 0$, $X_{k}$ and diagonal
$T_{j}\succeq 0$, $T_{ab}\succeq 0$ satisfy
\begin{equation}
\label{eq:C}
\text{(C1) }P+\Pi\Lm\succ 0,\quad
\text{(C2) }\Rc{1},\Rc{2}\succeq 0,\quad
\text{(C3) }\Xibf\prec 0.
\end{equation}
Then, for $w\equiv 0$, every solution of \eqref{eq:model} exists on
$[0,\infty)$ and is unique, $\sup_{t\ge 0}|y(t)|\le\beta\|\phi\|_{\infty}$
with $\beta$ independent of the delay functions, $y(t)\to 0$ as $t\to\infty$,
and the origin is the unique equilibrium. If (C1), (C2) and
$\tilde\Xibf(\gamma)\prec 0$ hold, then (C3) holds and, for zero history,
$\|z\|_{2}\le\gamma\|w\|_{2}$ for every $w\in L_{2}[0,\infty)$.
\end{theorem}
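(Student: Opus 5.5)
The plan is the standard Lyapunov--Krasovskii route built on Lemma~\ref{lem:vdot}. The first step is existence and uniqueness. The right-hand side of \eqref{eq:model} is globally Lipschitz in the history, because each $g_i$ is Lipschitz with constant $\lbar$ by (H1). The delays are continuous and take values in $[0,h]$ by (H2). Hence the method of steps, or the standard theory of retarded functional differential equations with globally Lipschitz right-hand side, gives a unique solution. A Gronwall bound excludes blow-up in finite time, so the solution exists on $[0,\infty)$ and is locally absolutely continuous for $t\ge 0$. Lemma~\ref{lem:vdot} needs $y$ absolutely continuous on $[t-h,t]$, which the history $\phi$ need not provide. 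I would therefore apply the lemma for $t\ge h$, or read $\dot y$ on $[-h,0]$ distributionally. The simplest choice is to start $V$ at $t_0=h$ and bound $|y|$ on $[0,h]$ by Gronwall in terms of $\|\phi\|_\infty$, with a constant that depends only on $D,A,B_1,B_2,\lbar,h$ and not on the delay functions.

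\textbf{Lower and upper bounds on $V$.} For the lower bound, (H1) gives $\ell^- s\le g_i(s)$ for $s\ge0$ and the reverse inequality for $s\le 0$. Hence $2\int_0^{y_i}g_i\ge \lmin y_i^2$, and therefore $V\ge y^\tp(P+\Pi\Lm)y\ge c_1|y|^2$ by (C1), because the integral terms are nonnegative. For the upper bound, $2\int_0^{y_i}g_i\le\lmax y_i^2$ gives $V(t)\le c_2\sup_{[t-h,t]}|y|^2+c_3\int_{t-h}^t|\dot y|^2$. On $[h,2h]$ the derivative $\dot y$ is bounded by a constant times $\sup_{[0,2h]}|y|$ through \eqref{eq:model}. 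Together with the Gronwall step this gives $V(2h)\le c\|\phi\|_\infty^2$, with $c$ independent of the delay functions. So I would start the argument at $t_0=2h$ instead of $h$.

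\textbf{Stability, convergence and the equilibrium.} By (C3) there is $\epsilon>0$ with $\Xibf\preceq-\epsilon I$. Lemma~\ref{lem:vdot} then gives $\dot V\le-\epsilon|\xi|^2\le-\epsilon|y|^2$ for almost every $t\ge t_0$. Since $V$ is absolutely continuous, integrating gives $c_1|y(t)|^2\le V(t_0)$, which yields the bound with $\beta$ independent of the delays. It also gives $\int_{t_0}^\infty|y|^2<\infty$. Boundedness of $y$ and \eqref{eq:model} make $\dot y$ bounded, so $|y|^2$ is uniformly continuous, and Barbalat's lemma gives $y(t)\to0$. An equilibrium $y^*$ is a constant solution, so it must tend to $0$, hence $y^*=0$. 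Note that shifting coordinates is not needed here, because the convergence holds for every solution.

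\textbf{$L_2$-gain.} Suppose $\tilde\Xibf(\gamma)\prec0$. Its leading block $\Xibf+e_{y,0}^\tp e_{y,0}$ is then negative definite, which implies (C3). For zero history, $y\equiv0$ on $[-h,0]$, so $y$ is absolutely continuous on $[-h,t]$, $V(0)=0$, and Lemma~\ref{lem:vdot} applies for all $t\ge0$ whenever $w\in L_2$ and $\dot y\in L^2_{\rm loc}$; the latter holds since the solution is locally bounded. Integrating $\dot V+|z|^2-\gamma^2|w|^2\le0$ over $[0,T]$ and using $V(T)\ge0$ from (C1) gives $\int_0^T|z|^2\le\gamma^2\|w\|_2^2$. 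Letting $T\to\infty$ completes the proof.

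The main obstacle is the regularity bookkeeping for Lemma~\ref{lem:vdot} with a merely continuous history. It is resolved by the shift to $t_0=2h$ together with delay-independent Gronwall constants. The remaining steps are standard.
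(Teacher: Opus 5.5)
Your proposal is correct and follows the paper's proof essentially step for step. It uses a globally Lipschitz existence argument with delay-independent Gronwall constants, applies Lemma~\ref{lem:vdot} only once $[t-h,t]$ has left the history, bounds $V$ below by $y^{\tp}(P+\Pi\Lm)y$, gets convergence from square integrability plus uniform continuity, and integrates from $V(0)=0$ for the gain. The only deviation, starting at $t_0=2h$, is harmless but unnecessary: on $(0,h]$, \eqref{eq:model} and the Gronwall bound already bound $\dot y$ by a delay-independent multiple of $\|\phi\|_\infty$, which is why the paper can start at $t_0=h$.
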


\begin{remark}
\label{rem:input}
For a constant input $w\equiv\bar w$, (C1)--(C3) imply a unique and
globally asymptotically stable equilibrium $y^{*}$. Evaluating $\Xibf$
at the vector whose $y$-type blocks equal $u-v$ and whose $g$-type blocks
equal $g(u)-g(v)$ shows that $\Gamma(u)=-Du+(A+B_{1}+B_{2})g(u)+\bar w$
satisfies $|\Gamma(u)-\Gamma(v)|\ge c|u-v|$ with some $c>0$. Hence
$\Gamma$ is a homeomorphism \citep{forti1995new}, and Theorem~\ref{thm:network} applies
to $y-y^{*}$, whose nonlinearity satisfies (H1) with the same bounds.
\end{remark}

\subsection{Size-uniform certificate}
\label{ssec:uniform}
The variables are now tied to the chain. Every module carries the
$2\times 2$ blocks $P_{t}$, $S_{k,t}$ and $R_{k,t}$, the matrix
$X_{k,t}\in\R^{4\times 4}$ formed by the four $2\times 2$ blocks of
$X_{k}$, and the diagonal entries of $\Pi_{t}$, $T_{j,t}$ and $T_{ab,t}$
of its type $t$. The network variables are assembled block-diagonally.
Then $P+\Pi\Lm$ and, after a permutation, each $\Rc{k}$ are block
diagonal over the modules, and (C1), (C2) and the sign conditions reduce
to the local conditions
\begin{equation}
\label{eq:local}
\begin{gathered}
P_{t}+\Pi_{t}\Lm\succ 0,\quad\Rc{k,t}\succeq 0,\quad S_{k,t}\succeq 0,
\quad R_{k,t}\succeq 0,\\
\Pi_{t}\succeq 0,\quad T_{j,t}\succeq 0,\quad T_{ab,t}\succeq 0,\qquad
t,k=1,2,
\end{gathered}
\end{equation}
where $\Rc{k,t}$ is $\Rc{k}$ built from $R_{k,t}$ and $X_{k,t}$.

A cell consists of a type-1 module and the following type-2 module. In
an open chain with $N=2K$ modules that starts with type 1, cell
$j\in\{1,\dots,K\}$ contains the modules $2j-1$ and $2j$, and
$\zeta_{j}\in\R^{48}$ collects their coordinates of $\xi$. The coupling
splits into a part inside the cell and parts to the next and to the
previous cell,
\begin{equation*}
B_{2,0}=\begin{bmatrix}0&F_{f}\\ F_{b}&0\end{bmatrix},\
B_{2,+}=\begin{bmatrix}0&0\\ F_{f}&0\end{bmatrix},\
B_{2,-}=\begin{bmatrix}0&F_{b}\\ 0&0\end{bmatrix},
\end{equation*}
so that the dynamics of cell $j$ read
$\dot y^{[j]}=a_{0}\zeta_{j}+a_{+}\zeta_{j+1}+a_{-}\zeta_{j-1}+w^{[j]}$
with
\begin{equation}
\label{eq:a}
\begin{aligned}
a_{0}&=-D_{c}e_{y,0}+A_{c}e_{g,0}+B_{1,c}e_{g,1}+B_{2,0}e_{g,2},\\
a_{\pm}&=B_{2,\pm}e_{g,2}.
\end{aligned}
\end{equation}
Here $D_{c}=\diag(D_{1},D_{2})$, $A_{c}$ and $B_{1,c}$ are formed
likewise, and the selectors act on the 48 coordinates of a cell. With
$L_{c}$, $W_{c}$ and $\tilde R_{c}$ the matrices $L$, $W$ and $\tilde R$
of one cell, the cell blocks are
\begin{equation}
\label{eq:cell}
\begin{aligned}
\Xibf_{0}&=L_{c}+\Sym(W_{c}a_{0})+\textstyle\sum_{q\in\{0,+,-\}}
a_{q}^{\tp}\tilde R_{c}a_{q},\\
\Xibf_{1}&=W_{c}a_{+}+a_{-}^{\tp}W_{c}^{\tp}+a_{0}^{\tp}\tilde R_{c}a_{+}
+a_{-}^{\tp}\tilde R_{c}a_{0},
\end{aligned}
\end{equation}
and their $L_{2}$ counterparts are
\begin{equation}
\label{eq:cell2}
\begin{aligned}
\tilde\Xibf_{0}&=\begin{bmatrix}\Xibf_{0}+e_{y,0}^{\tp}e_{y,0}&
W_{c}+a_{0}^{\tp}\tilde R_{c}\\ \star&\tilde R_{c}-\gamma^{2}I
\end{bmatrix},\\
\tilde\Xibf_{1}&=\begin{bmatrix}\Xibf_{1}&a_{-}^{\tp}\tilde R_{c}\\
\tilde R_{c}a_{+}&0\end{bmatrix}.
\end{aligned}
\end{equation}

\begin{lemma}
\label{lem:cell}
In cell order, the matrix $\Xibf$ of an open chain with $N=2K$ modules
that starts with type 1 is
\begin{equation}
\label{eq:open}
\Xibf_{N}=\Tc_{K}(\Xibf_{0},\Xibf_{1})-A_{\mathrm{out}}^{\tp}
(I_{2}\otimes\tilde R_{c})A_{\mathrm{out}},
\end{equation}
where $A_{\mathrm{out}}=\col(a_{+}J_{1},a_{-}J_{K})$ and $J_{j}$ selects
$\zeta_{j}$. For a ring with $N=2K\ge 4$ modules,
$\Xibf_{N}=\mathcal{C}_{K}(\Xibf_{0},\Xibf_{1})$. The same identities hold
for $\tilde\Xibf_{N}(\gamma)$ with $(\tilde\Xibf_{0},\tilde\Xibf_{1})$ in
place of $(\Xibf_{0},\Xibf_{1})$ and $[a_{\pm}\ 0]$ in place of $a_{\pm}$.
\end{lemma}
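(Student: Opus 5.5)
The plan is to split $\Xibf$ of \eqref{eq:Xi} into its module-local part and its coupling part and to identify each in cell coordinates.

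\emph{Step 1: the module-local terms.} Since all variables are assembled block-diagonally over the modules, the cell permutation turns $L$ and $W$ into $I_{K}\otimes L_{c}$ and $I_{K}\otimes W_{c}$. Here the cell permutation is the one that lists $\zeta_{1},\dots,\zeta_{K}$ in order. The argument is that every selector in \eqref{eq:L}, including $\Fc{k}$ and the sector terms \eqref{eq:sector}, acts coordinatewise on each module's own blocks. Likewise $\tilde R$ becomes $I_{K}\otimes\tilde R_{c}$ on the $y$-coordinates.

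\emph{Step 2: the dynamics.} By \eqref{eq:a}, the cell-ordered dynamics matrix is $\Ac=\mathcal{M}$. This $\mathcal{M}$ has diagonal blocks $a_{0}$ and the block $a_{+}$ in position $(j,j+1)$ for $j<K$. It has the block $a_{-}$ in position $(j,j-1)$ for $j>1$. For a ring, the wrap-around positions $(K,1)$ and $(1,K)$ carry $a_{+}$ and $a_{-}$. This is valid for a ring because $N$ is even, so the pair $(N,1)$ joins a type-2 module to a type-1 module exactly like an intercell pair. We also need $K\ge2$, so that no block collides with a diagonal one.

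\emph{Step 3: expand the quadratic forms.} The term $\Sym(W\Ac)$ contributes $\Sym(W_{c}a_{0})$ on the diagonal. Off the diagonal, its block $(j,j+1)$ is $W_{c}a_{+}+(W_{c}a_{-})^{\tp}$, which gives the $W$-part of $\Xibf_{1}$.

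For $\Ac^{\tp}\tilde R\Ac=\sum_{i}(\text{row }i)^{\tp}\tilde R_{c}(\text{row }i)$, row $i$ of $\mathcal{M}$ has blocks $a_{-},a_{0},a_{+}$ at columns $i-1,i,i+1$. Its outer product gives the following contributions.
\begin{itemize}
\item Diagonal contributions $a_{0}^{\tp}\tilde R_{c}a_{0}$ at block $(i,i)$, $a_{+}^{\tp}\tilde R_{c}a_{+}$ at block $(i+1,i+1)$ and $a_{-}^{\tp}\tilde R_{c}a_{-}$ at block $(i-1,i-1)$.
\item Terms $a_{0}^{\tp}\tilde R_{c}a_{+}$ at block $(i,i+1)$ and $a_{-}^{\tp}\tilde R_{c}a_{0}$ at block $(i-1,i)$. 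Both are superdiagonal and enter $\Xibf_{1}$.
\item The cross term $a_{-}^{\tp}\tilde R_{c}a_{+}$, placed at distance two.
\end{itemize}

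\emph{The main obstacle.} The distance-two terms must vanish, and they do. $a_{+}$ uses only the $g_{2}$-coordinates of the first module, since $B_{2,+}$ has its nonzero block in the first column. $a_{-}$ uses only the second module's, and its output lies in the first module's rows while $a_{+}$'s lies in the second's. Because $\tilde R_{c}$ is block diagonal over the modules, $a_{-}^{\tp}\tilde R_{c}a_{+}=0$. This is exactly why the cross term vanishes, and I would verify it explicitly.

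\emph{Step 4: sum per diagonal block.} In a ring, each diagonal block receives all three $a_{q}^{\tp}\tilde R_{c}a_{q}$, giving $\Xibf_{0}$. In an open chain, the $a_{+}^{\tp}\tilde R_{c}a_{+}$ term is missing at block $(1,1)$, because there is no row $0$. The $a_{-}^{\tp}\tilde R_{c}a_{-}$ term is missing at block $(K,K)$. These deficits are exactly $A_{\mathrm{out}}^{\tp}(I_{2}\otimes\tilde R_{c})A_{\mathrm{out}}$, which proves \eqref{eq:open}; for the ring we get $\mathcal{C}_{K}$.

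\emph{The $L_{2}$ case.} Here I append $w$ to each cell. The extra blocks $e_{y,0}^{\tp}e_{y,0}$ and $-\gamma^{2}I$ are local. The term $W$ pairs $\zeta_{j}$ with $w^{[j]}$ only. The term $\Ac^{\tp}\tilde R$ has block $(i,i)$ equal to $a_{0}^{\tp}\tilde R_{c}$, block $(i+1,i)$ equal to $a_{+}^{\tp}\tilde R_{c}$ and block $(i-1,i)$ equal to $a_{-}^{\tp}\tilde R_{c}$, and these yield \eqref{eq:cell2}. The same computation with $[a_{\pm}\ 0]$ applies verbatim.
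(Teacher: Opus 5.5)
Your proof is correct and follows the paper's argument. You expand $\Ac^{\tp}\tilde R\Ac$ row by row in cell order, which is the same as the paper's expansion of $\sum_{j}\dot y^{[j]\tp}\tilde R_{c}\dot y^{[j]}$. You remove the distance-two term $a_{-}^{\tp}\tilde R_{c}a_{+}$ by the same module-support argument, read off the end-cell deficits as $A_{\mathrm{out}}^{\tp}(I_{2}\otimes\tilde R_{c})A_{\mathrm{out}}$, and treat the $L_{2}$ case through the blocks of $\Ac^{\tp}\tilde R$.

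One point is worth making explicit. For a ring with $K=2$, the wrap-around blocks land on off-diagonal positions that are already occupied, so $(1,2)$ and $(2,1)$ carry $a_{+}+a_{-}$. This is harmless because your expansion is bilinear and $\mathcal{C}_{K}$ is defined by adding $\Xibf_{1}$ and $\Xibf_{1}^{\tp}$ to those blocks.
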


\begin{theorem}
\label{thm:uniform}
Let the type variables satisfy \eqref{eq:local} and let $\epsilon>0$.

(a) If a symmetric $Y\in\R^{48\times 48}$ satisfies
\begin{equation}
\label{eq:gram}
M=\begin{bmatrix}Y&-\Xibf_{1}\\ -\Xibf_{1}^{\tp}&-\Xibf_{0}-I-Y
\end{bmatrix}\succeq\epsilon I,
\end{equation}
then every network in $\mathcal{N}$ satisfies (C1)--(C3) with
$\Xibf_{N}\preceq-(1+2\epsilon)I$.

(b) If a symmetric $\tilde Y\in\R^{52\times 52}$ satisfies
\begin{equation}
\label{eq:gramL2}
\tilde M=\begin{bmatrix}\tilde Y&-\tilde\Xibf_{1}\\
-\tilde\Xibf_{1}^{\tp}&-\tilde\Xibf_{0}-\tilde Y\end{bmatrix}
\succeq\epsilon I,
\end{equation}
then every network in $\mathcal{N}$ satisfies (C1), (C2) and
$\tilde\Xibf_{N}(\gamma)\preceq-2\epsilon I$.

In both cases, Theorem~\ref{thm:network} applies to every network in
$\mathcal{N}$, in case (b) with the same $\gamma$.
\end{theorem}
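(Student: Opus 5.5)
The proof rests on a telescoping sum-of-squares identity driven by the Gram-type matrix $M$. It has three steps: even chains starting with type 1, then rings, then all remaining members of $\mathcal{N}$ by embedding them in a longer even chain. By the block-diagonal assembly of the network variables described before \eqref{eq:local}, the local conditions \eqref{eq:local} give (C1), (C2) and the sign conditions of Theorem~\ref{thm:network} for every network in $\mathcal{N}$. So only (C3), or $\tilde\Xibf_N(\gamma)\prec 0$, has to be shown.

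\emph{Even chains and rings.} Let $N=2K$ and write a vector in cell order as $\zeta=\col(\zeta_1,\dots,\zeta_K)$. For the ring, use indices modulo $K$ and sum the quadratic form of $M$ over the $K$ consecutive pairs $\col(\zeta_j,\zeta_{j+1})$, $j=1,\dots,K$. Pair $j$ contributes
\begin{equation*}
\zeta_j^{\tp}Y\zeta_j-2\zeta_j^{\tp}\Xibf_1\zeta_{j+1}-\zeta_{j+1}^{\tp}(\Xibf_0+I+Y)\zeta_{j+1}.
\end{equation*}
Summed cyclically, the $Y$ terms cancel, and the sum equals $-\zeta^{\tp}\mathcal{C}_K(\Xibf_0,\Xibf_1)\zeta-|\zeta|^2$. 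Each $\zeta_j$ occurs in exactly two pairs, so $M\succeq\epsilon I$ bounds the sum from below by $2\epsilon|\zeta|^2$. Hence, by Lemma~\ref{lem:cell}, $\Xibf_N\preceq-(1+2\epsilon)I$.

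For $K=2$ the two pairs $(1,2)$ and $(2,1)$ produce both off-diagonal contributions, which matches the definition of $\mathcal{C}_2$. For $K\ge 3$ nothing special happens.

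For the open chain, sum over $j=0,\dots,K$ with $\zeta_0=\zeta_{K+1}=0$. The pair $(0,1)$ supplies $-\zeta_1^{\tp}(\Xibf_0+I+Y)\zeta_1$ and the pair $(K,K+1)$ supplies $\zeta_K^{\tp}Y\zeta_K$, so the telescoping again closes. Every $\zeta_j$ still occurs in two pairs, and we get $\Tc_K(\Xibf_0,\Xibf_1)\preceq-(1+2\epsilon)I$. The boundary term in \eqref{eq:open} is negative semidefinite, so the bound carries over to $\Xibf_N$.

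Part (b) is identical with $\tilde\Xibf_0,\tilde\Xibf_1,\tilde Y$. There is no $-I$ in \eqref{eq:gramL2}, which explains the bound $-2\epsilon I$.

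\emph{Remaining open chains.} This covers odd $N$ and chains that start with type 2, and it is the step I expect to need care. I would embed such a chain into an even chain starting with type 1 by appending one module at the front and/or at the end. I then compare the principal submatrix of the longer chain's $\Xibf$, taken on the coordinates of the kept modules, with the shorter chain's $\Xibf$.

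The terms of \eqref{eq:Xi} compare as follows:
\begin{itemize}
\item $L$ is module-block-diagonal, so its restriction is exact.
\item $W$ is module-block-diagonal too. Restricting $\Sym(W\Ac)$ therefore keeps only the rows of $\Ac$ belonging to kept modules and their kept columns, which is exactly the short chain's $\Ac$.
\item In $\Ac^{\tp}\tilde R\Ac$, each removed module's row of $\Ac$ still couples, through $F_f$ or $F_b$ acting on $g_2$, to the adjacent kept boundary module. It contributes an extra term $a^{\tp}\tilde R_{t}a\succeq 0$.
\end{itemize}
Hence $\Xibf_{\text{short}}\preceq$ (principal submatrix) $\preceq-(1+2\epsilon)I$. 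For $\tilde\Xibf$ the same argument works after also dropping the $w$-coordinates of the removed modules, since the bottom-right block $\tilde R-\gamma^2 I$ is module-diagonal.

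One has to check that $\tilde R$ is block diagonal over modules and that the appended module's type variables are the repeated ones. Both hold by construction, so this step is bookkeeping rather than a genuine obstacle.

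With (C1)--(C3), or $\tilde\Xibf_N(\gamma)\prec 0$, established for every member of $\mathcal{N}$, Theorem~\ref{thm:network} applies, in case (b) with the common $\gamma$.
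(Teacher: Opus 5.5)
Your proposal is correct and follows the paper's proof. The zero-padded boundary pairs $(0,1)$ and $(K,K+1)$ reproduce the paper's terms $J_{1}^{\tp}(-\Xibf_{0}-I-Y)J_{1}$ and $J_{K}^{\tp}YJ_{K}$, the ring is the cyclic sum of the pair terms, and the remaining chains are handled as principal restrictions of a longer even chain that starts with type~1. You append only one or two modules instead of the paper's $N+2$ or $N+3$ modules that contain both neighbours of the part; this works equally well, because a missing neighbour only makes the restriction exact on that side.
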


The conditions of Theorem~\ref{thm:uniform} have a fixed size: two sets
of type variables, one matrix $Y$ and, besides \eqref{eq:local}, one LMI
of order 96 for \eqref{eq:gram} or 104 for \eqref{eq:gramL2}, whatever
$N$ is. The
identity in \eqref{eq:gram} only fixes the scale, since (C1)--(C3) are
homogeneous in the variables.

\begin{lemma}
\label{lem:exact}
Fix the type variables, and let $H_{0}=-\Xibf_{0}-I$ and
$H_{1}=-\Xibf_{1}$. The following statements are equivalent.
(i)~$-\Xibf_{N}-I\succeq 0$ for every open chain with $N=2K$ modules that
starts with type 1. (ii)~$H_{0}+zH_{1}+\bar zH_{1}^{\tp}\succeq 0$ for
every $z\in\mathbb{C}$ with $|z|=1$. (iii)~There is a symmetric $Y$ with
$\bigl[\begin{smallmatrix}Y&H_{1}\\ H_{1}^{\tp}&H_{0}-Y\end{smallmatrix}
\bigr]\succeq 0$.
\end{lemma}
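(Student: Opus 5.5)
Using Lemma~\ref{lem:cell}, condition (i) states that $-\Tc_{K}(\Xibf_{0},\Xibf_{1})-I+A_{\mathrm{out}}^{\tp}(I_{2}\otimes\tilde R_{c})A_{\mathrm{out}}\succeq 0$ for every $K$. I would prove the cycle (iii)$\Rightarrow$(i)$\Rightarrow$(ii)$\Rightarrow$(iii).

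For (iii)$\Rightarrow$(i), I would use the telescoping sum of cells. With the given $Y$, write $\Tc_{K}(H_{0},H_{1})$ as the sum over $j=1,\dots,K-1$ of the $2\times 2$ block matrix $\bigl[\begin{smallmatrix}Y&H_{1}\\ H_{1}^{\tp}&H_{0}-Y\end{smallmatrix}\bigr]$ placed at positions $(j,j+1)$. Add a diagonal correction $\diag(H_{0}-Y,0,\dots,0,Y)$ at the ends. Then $H_{0}-Y\succeq 0$ and $Y\succeq 0$ follow from the diagonal blocks of the matrix in (iii), so $\Tc_{K}(H_{0},H_{1})\succeq 0$. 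The boundary term $A_{\mathrm{out}}^{\tp}(I_{2}\otimes\tilde R_{c})A_{\mathrm{out}}$ is positive semidefinite because $\tilde R_{c}\succeq 0$ by \eqref{eq:local}, so adding it preserves the sign. Hence $-\Xibf_{N}-I\succeq 0$. The case $K=1$ is just $H_{0}\succeq 0$ plus the boundary term.

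For (i)$\Rightarrow$(ii), I would use a Fourier test-vector argument. Fix $z$ with $|z|=1$ and a vector $v\in\mathbb{C}^{48}$, and take the test vector $x=\col(v,\bar z v,\bar z^{2}v,\dots,\bar z^{K-1}v)/\sqrt{K}$ (the power is chosen to match the convention in (ii)). Then $x^{*}\Tc_{K}(H_{0},H_{1})x=v^{*}H_{0}v+\frac{K-1}{K}\,2\,\mathrm{Re}(z v^{*}H_{1}v)$. The boundary term is the obstacle, since it enters (i) with the favourable sign and must be controlled. It acts only on $\zeta_{1}$ and $\zeta_{K}$, so its contribution is at most $2\|a_{\pm}\|^{2}\|\tilde R_{c}\|\,|v|^{2}/K$. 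Apply (i) to the real and imaginary parts, using that the matrix is real symmetric so the Hermitian form is nonnegative. This gives $v^{*}(H_{0}+zH_{1}+\bar zH_{1}^{\tp})v\ge -O(1/K)$. Letting $K\to\infty$ yields (ii).

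For (ii)$\Rightarrow$(iii), I would invoke the discrete-time KYP lemma (the Kalman--Yakubovich--Popov frequency-domain condition), equivalently the matrix Fejér--Riesz theorem. The trigonometric matrix polynomial $\Phi(z)=H_{1}^{\tp}\bar z+H_{0}+H_{1}z$ is positive semidefinite on the unit circle, so it factors as $\Phi=Q(z)^{*}Q(z)$ with $Q(z)=Q_{0}+Q_{1}z$. Comparing coefficients gives $H_{0}=Q_{0}^{\tp}Q_{0}+Q_{1}^{\tp}Q_{1}$ and $H_{1}=Q_{0}^{\tp}Q_{1}$, with the order of the factors fixed by this matching. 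I would then take $Y=Q_{0}^{\tp}Q_{0}$. The matrix in (iii) becomes $\col(Q_{0},Q_{1})^{\tp}\col(Q_{0},Q_{1})$, or its block-reversed arrangement $\bigl[Q_{0}\ Q_{1}\bigr]^{\tp}\bigl[Q_{0}\ Q_{1}\bigr]$, which is positive semidefinite.

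Real factors $Q_{i}$ exist because the coefficients of $\Phi$ are real; the Fejér--Riesz theorem holds for real matrix coefficients in the semidefinite case, including singular $\Phi$. If I preferred to avoid the factorization, I would use the KYP lemma for the system with state matrix $0$ and input matrix $I$. It asserts existence of $Y$ exactly under (ii). Nonstrict KYP requires a controllability hypothesis, which here is trivial because the input matrix is $I$. This step is the main technical point, but it is a direct citation.
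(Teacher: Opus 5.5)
Your proposal is correct and follows the paper's proof in all essentials. It uses the same telescoping for (iii)$\Rightarrow$(i), Fourier test vectors for (i)$\Rightarrow$(ii), and the operator Fej\'er--Riesz theorem for (ii)$\Rightarrow$(iii); the only difference in (i)$\Rightarrow$(ii) is that the paper sets $x_{1}=x_{K}=0$ so that $A_{\mathrm{out}}x=0$ exactly, whereas you normalize and let the boundary contribution decay like $1/K$. The one claim you leave unjustified, that real factors $Q_{0},Q_{1}$ exist, is not needed: the paper factors with complex $P_{0},P_{1}$ and takes $Y=\operatorname{Re}(P_{0}^{*}P_{0})$, because the entrywise real part of the Gram matrix $[P_{0}\ P_{1}]^{*}[P_{0}\ P_{1}]$ stays positive semidefinite and $H_{0},H_{1}$ are real. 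Two harmless slips remain: your $x_{j}=\bar z^{j-1}v$ actually yields $H_{0}+\bar zH_{1}+zH_{1}^{\tp}$, and the matrix of (iii) is $[Q_{0}\ Q_{1}]^{\tp}[Q_{0}\ Q_{1}]$, not $\col(Q_{0},Q_{1})^{\tp}\col(Q_{0},Q_{1})=H_{0}$.
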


By Lemma~\ref{lem:exact}, whose proof removes the boundary term of
\eqref{eq:open} by test vectors that vanish on the end cells and then
applies the operator Fej\'er--Riesz theorem \citep{dritschel2010operator},
the cell LMI \eqref{eq:gram} adds no conservatism, apart from the margin
$\epsilon$, to the requirement that one set of tiled variables certify
every chain length. The conservatism
of Theorem~\ref{thm:uniform} lies in the functional \eqref{eq:lkf} and its
bounds, in the module-local and type-periodic variables and in the
independence from $N$. Section~\ref{sec:results} measures these
restrictions against certificates computed separately for each $N$.

\begin{remark}
\label{rem:modulewise}
With module-local variables, $V$ and the bound \eqref{eq:vdot} are sums
of module terms. The stability and $L_{2}$ conclusions of
Theorem~\ref{thm:uniform} therefore also hold when each module $i$ has
its own delays $\tau_{k}^{(i)}$ that satisfy (H2),
provided that each module emits its delayed signals with its own delays,
so that its neighbours receive $g(y^{(i)}(t-\tau_{2}^{(i)}(t)))$. The
vector $\xi$ is then formed with the delays of the respective module, and
the Wirtinger split and the sector terms are applied module by module.
\end{remark}

\begin{remark}
\label{rem:K4}
For $\taub_{1},\taub_{2}>0$, a variant replaces $y^{\tp}Py$ in
\eqref{eq:lkf} by
$\eta^{\tp}\mathcal{P}\eta$, $\eta=\col(y,I_{1},I_{2})$,
$I_{k}=\int_{t-\taub_{k}}^{t}y(s)\,ds=\tau_{k}\nu_{k}+(\taub_{k}-\tau_{k})
\omega_{k}$. Its bound is affine in $(\tau_{1}(t),\tau_{2}(t))$ and is
imposed at the four vertices $\tau_{k}\in\{0,\taub_{k}\}$, one $Y$ each,
with (C1) replaced by $\mathcal{P}+\diag(\Pi\Lm,S_{1}/\taub_{1},
S_{2}/\taub_{2})\succ 0$. The vertex argument needs delays common to all
modules, so the variant serves only for comparisons in
Section~\ref{sec:results}.
\end{remark}

\subsection{Certified regions}
\label{ssec:regions}
For the couplings $(\kappa\hat G,-\kappa\hat G^{\tp})$,
$\Ac=\Ac^{0}+\kappa\Ac^{c}$ is affine in $\kappa$, where
$\Ac^{c}=\hat B_{2}e_{g,2}$ and $\hat B_{2}$ is $B_{2}$ at $\kappa=1$,
whereas $L$ and $W$ do not depend on $\kappa$. For fixed variables,
$\theta\in[0,1]$ and $\kappa_{\theta}=(1-\theta)\kappa_{-}+\theta\kappa_{+}$,
\begin{equation}
\label{eq:convex}
\begin{aligned}
&(1-\theta)\Xibf(\kappa_{-})+\theta\Xibf(\kappa_{+})-\Xibf(\kappa_{\theta})\\
&\qquad=\theta(1-\theta)(\kappa_{+}-\kappa_{-})^{2}\Ac^{c\tp}\tilde R\Ac^{c}
\succeq 0,
\end{aligned}
\end{equation}
and the same identity holds for $\tilde\Xibf$ with $[\Ac^{c}\ 0]$ in place
of $\Ac^{c}$. Delay bounds are scaled along a fixed direction,
$\taub=s\taub^{0}$ with $s\ge 0$.

\begin{corollary}
\label{cor:region}
Let the type variables satisfy \eqref{eq:local}, let $\taub=s\taub^{0}$,
and let \eqref{eq:gram} hold at $\kappa_{-}$ and at $\kappa_{+}$ with
separate matrices $Y_{-}$ and $Y_{+}$. Then the conclusions of
Theorem~\ref{thm:network} hold for every network in $\mathcal{N}$, every
constant $\kappa\in[\kappa_{-},\kappa_{+}]$ and all delays that satisfy
(H2) with $\taub=s'\taub^{0}$, $0\le s'\le s$. The analogous statement
holds for \eqref{eq:gramL2}.
\end{corollary}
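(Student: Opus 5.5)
The proof rests on two ingredients: the convexity identity \eqref{eq:convex} in $\kappa$, and a monotonicity argument in the delay scale. Fix the type variables that satisfy \eqref{eq:local}, together with $Y_{-}$ and $Y_{+}$. Neither the local conditions nor $L_{c}$ nor $W_{c}$ depend on $\kappa$, so (C1) and (C2) hold for every network in $\mathcal{N}$ and every $\kappa$, exactly as in the proof of Theorem~\ref{thm:uniform}. By Theorem~\ref{thm:uniform}(a), applied at $\kappa_{-}$ and at $\kappa_{+}$ separately, every network in $\mathcal{N}$ satisfies $\Xibf_{N}(\kappa_{\pm})\preceq-(1+2\epsilon)I$. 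The two matrices $Y_{\pm}$ enter only this step, so they need not coincide. For $\kappa=\kappa_{\theta}$ and a fixed network, \eqref{eq:convex} then gives
$\Xibf_{N}(\kappa_{\theta})\preceq(1-\theta)\Xibf_{N}(\kappa_{-})+\theta\Xibf_{N}(\kappa_{+})\preceq-(1+2\epsilon)I$.
The identity \eqref{eq:convex} is applied to the network matrix itself, where $\tilde R\succeq 0$ because $R_{k}\succeq 0$. Hence (C3) holds at $\kappa$, and Theorem~\ref{thm:network} applies. The $L_{2}$ case follows in the same way from Theorem~\ref{thm:uniform}(b) and the version of \eqref{eq:convex} with $[\Ac^{c}\ 0]$.

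The delay scale is the delicate part, because $\Xibf$ depends on $\taub$ through $\tilde R=\taub_{1}^{2}R_{1}+\taub_{2}^{2}R_{2}$. I would avoid re-evaluating the LMI at $s'$. Instead, I would show that the certificate built at scale $s$ remains valid for delays bounded by $s'\taub^{0}$. Such delays also satisfy (H2) with the larger bounds $\taub=s\taub^{0}$, because (H2) requires only $\tau_{k}(t)\in[0,\taub_{k}]$. Theorem~\ref{thm:network}, with the functional \eqref{eq:lkf} built on $\taub=s\taub^{0}$, therefore applies unchanged. One point needs checking. (H3) prescribes the history on $[-s'h^{0},0]$, whereas the functional at scale $s$ needs $y$ on $[-sh^{0},0]$. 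I would handle this by extending $\phi$ constantly to the longer interval. That extension is continuous, it does not change the solution because the delays never reach beyond $s'h^{0}$, and it keeps $\|\phi\|_{\infty}$, so the bound $\beta\|\phi\|_{\infty}$ and the convergence carry over. For the $L_{2}$ statement, zero history extends to zero history.

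Combining the two parts, every constant $\kappa\in[\kappa_{-},\kappa_{+}]$ and every $s'\le s$ are covered by one set of type variables. The main obstacle I expect is this history-extension step, that is, justifying the reduction from bounds $s'\taub^{0}$ to bounds $s\taub^{0}$ via (H2) and (H3). An alternative route would be monotonicity of $\Xibf$ in $s$, but that fails because $\tilde R$ and $\Rc{k}\Fc{k}$ scale in opposite directions. The convexity step is routine once one notes that \eqref{eq:convex} holds for every network in $\mathcal{N}$.
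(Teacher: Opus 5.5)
Your argument is correct and matches the paper's: Theorem~\ref{thm:uniform} at $\kappa_{\pm}$, the convexity identity \eqref{eq:convex} for intermediate $\kappa$, and the observation that delays bounded by $s'\taub^{0}$ satisfy (H2) with the bound $s\taub^{0}$, where your constant extension of the history only makes explicit what the paper leaves implicit. Your aside that monotonicity in $s$ fails is mistaken, however: for fixed variables, $L$, $W$, $\Ac$ and $\Fc{k}^{\tp}\Rc{k}\Fc{k}$ do not depend on $\taub$, so with $\tilde R^{0}=(\taub_{1}^{0})^{2}R_{1}+(\taub_{2}^{0})^{2}R_{2}$ the matrix $\Xibf$ at scale $s$ exceeds $\Xibf$ at scale $s'$ by $(s^{2}-s'^{2})\Ac^{\tp}\tilde R^{0}\Ac\succeq 0$, and $\tilde\Xibf(\gamma)$ at scale $s$ exceeds its value at $s'$ by $(s^{2}-s'^{2})[\Ac\ I]^{\tp}\tilde R^{0}[\Ac\ I]\succeq 0$, so this monotonicity gives an equally valid route that avoids the history extension altogether.
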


\begin{proof}
Theorem~\ref{thm:uniform} gives $\Xibf_{N}(\kappa_{\pm})\preceq
-(1+2\epsilon)I$, and \eqref{eq:convex} gives $\Xibf_{N}(\kappa_{\theta})
\preceq(1-\theta)\Xibf_{N}(\kappa_{-})+\theta\Xibf_{N}(\kappa_{+})$. The
local conditions do not depend on $\kappa$, and every delay allowed by
$s'$ is allowed by $s$.
\end{proof}

A verified pair $(\kappa_{-},\kappa_{+})$ at the delay scale $s$ thus
certifies the rectangle $[\kappa_{-},\kappa_{+}]\times[0,s]$ of the
$(\kappa,s)$ plane for all networks in $\mathcal{N}$, where the coupling
strength is an unknown constant common to all neighbour pairs.

\subsection{Computation and exact verification}
\label{ssec:compute}
The LMIs are formulated in CVXPY \citep{diamond2016cvxpy} and solved with
Clarabel \citep{goulart2026clarabel} using its direct solver qdldl, one
thread and chordal decomposition. Each solve runs in a separate process
with limits on wall time (1800\,s, or 3600\,s for some solves of
Remark~\ref{rem:K4}), resident memory (12\,GiB) and address space
(16\,GiB). A solve that is
stopped or fails counts as a failure, never as infeasibility. The conditions \eqref{eq:local} are imposed as
$P_{t}+\Pi_{t}\Lm\succeq c_{1}I$, $R_{k,t}\succeq 10^{-6}I$ and
$T_{j,t}\succeq 10^{-6}I$, with $c_{1}=1$ for \eqref{eq:gram},
$c_{1}=10^{-3}$ for \eqref{eq:gramL2} and $\epsilon=10^{-3}$ in both. A
candidate counts as verified when every condition holds with a positive
margin in double-precision eigenvalue computations.

Delay scales and coupling bounds are maximized by bisection, which tests
the upper end first, records the last verified and the first unverified
value, and stops after a solve that reached its limit. For
\eqref{eq:gramL2}, $\gamma^{2}$ is minimized first, and
$\hat\gamma^{2}=1.02\,\gamma^{2}_{\mathrm{opt}}$ is rounded up to three
significant digits (factors $1.05$ and $1.10$ follow if verification
fails). The LMIs are then solved again at fixed $\hat\gamma^{2}$ while a
common margin $t\in[0,1]$, added to every constraint, is maximized.

Every reported certificate of Theorem~\ref{thm:uniform} is, in addition,
verified in exact rational arithmetic. Each
stored double is replaced by its exact rational value, and $\kappa$,
$\taub$, $\hat\gamma^{2}$ and the margins enter as rationals. The sector
is exactly $[-1/10,11/10]$, which the double-precision values of $-0.1$
and $1.1$ used by the solver enclose. The cell blocks
\eqref{eq:cell}--\eqref{eq:cell2} are rebuilt with the same formulas. The
diagonal multipliers are checked by their signs. Every other local matrix
and $M-\epsilon I$ or $\tilde M-\epsilon I$ are factorized as $LDL^{\tp}$
without pivoting, and positive pivots prove positive definiteness. The
solver thus only proposes candidates, and whether a certificate of
Theorem~\ref{thm:uniform} holds is decided by this exact check alone,
independently of the solver tolerances. If this check fails at the largest verified point, the point is solved again with
the margin maximization, and if that fails too, the next smaller verified
point is checked. A separate check recomputes all factorizations and
rejects copies of a certificate in which one entry of $\Pi$, $Y$ or
$\hat G$, or one setting, has been altered. A certificate thus holds for
the network whose data are the exact values of the stored doubles.

For a fixed $N$, Theorem~\ref{thm:network} is also solved directly with
unrestricted variables (central) and with module-local variables that are
not tiled (block-central), under the margins that \eqref{eq:gram} and
\eqref{eq:gramL2} imply, namely $P+\Pi\Lm\succeq I$ and
$\Xibf_{N}\preceq-(1+2\cdot 10^{-3})I$, or $P+\Pi\Lm\succeq 10^{-3}I$ and
$\tilde\Xibf_{N}\preceq-2\cdot 10^{-3}I$. A third variant imposes the
block-central LMI through an explicit decomposition along its chordal
sparsity pattern, whose cliques span at most three neighbouring modules.
These per-$N$ certificates are verified in double precision. A
block-central certificate can also be verified exactly by factorizing
$-\Xibf_{N}-(1+2\cdot 10^{-3})I$ in module order, where the fill stays
within the band of neighbouring modules. As a check that bypasses
Lemma~\ref{lem:cell}, the tiled matrices $\Xibf_{N}$ and
$\tilde\Xibf_{N}(\hat\gamma)$ are assembled for open chains with $N\le 9$
and both starts and for rings with $N\le 10$, and their largest
eigenvalues are computed. Reported times are medians of five sequential
runs of the solver process, including the problem construction, on an
otherwise idle machine (Intel Core Ultra 7 270K Plus, 31\,GiB, Python
3.13, CVXPY 1.9.2, Clarabel 0.11.1).

\subsection{Assessment of conservatism}
\label{ssec:assess}
Outer bounds on the certifiable region and lower bounds on the $L_{2}$
gain use linear members $g(y)=Sy$ of (H1) with diagonal $S$. On rings,
$S$ is cell-periodic, and the four slopes of a cell form one of the $16$
vertex patterns of $\{\lmin,\lmax\}$, rounded into the sector, are all
$1/2$ (the slope of the SiLU at the origin) or are drawn at random. For
constant delays, the discrete Fourier transform over the cells decouples
a ring of $K$ cells into the cell symbols
\begin{equation}
\label{eq:symbol}
\begin{aligned}
\Delta_{\theta}(\lambda)={}&\lambda I+D_{c}-A_{c}S_{c}-B_{1,c}S_{c}
e^{-\lambda\tau_{1}}\\
&-\bigl(B_{2,0}+B_{2,+}e^{\mathrm{i}\theta}+B_{2,-}e^{-\mathrm{i}\theta}
\bigr)S_{c}e^{-\lambda\tau_{2}},
\end{aligned}
\end{equation}
$\theta=2\pi m/K$, $m=0,\dots,K-1$. A member with a root of
$\det\Delta_{\theta}(\lambda)=0$ in $\operatorname{Re}\lambda>0$ is
unstable, so no certificate exists for a class that contains it.
Rightmost roots are computed by pseudospectral collocation of the
infinitesimal generator \citep{breda2005pseudospectral} on 33 Chebyshev
nodes, refined by Newton's method and accepted at a relative residual of
at most $10^{-10}$. The outer bound $\kappa_{\mathrm{out}}(s)$ is the
smallest $\kappa$ at which an unstable member is found, for delays at the
four corners and the centre of the box
$[0,s\taub_{1}^{0}]\times[0,s\taub_{2}^{0}]$ and $64$ phases. Its root is
rechecked on 49 nodes, and at $1.25\,\kappa_{\mathrm{out}}(s)$ the growth
rate of a simulated ring agrees with $\operatorname{Re}\lambda$ within
5\%. Let $\kappa_{\mathrm{safe}}(s)$ be the supremum of all $b$ such that
every member with delay scale $s$ and $\kappa\in[0,b]$ is stable. An
unstable member is also a member for every larger delay scale, so
$\kappa_{\mathrm{safe}}(s')\le\kappa_{\mathrm{out}}(s)$ for $s'\ge s$,
which gives the outer staircase in Fig.~\ref{fig:region}(a), and every
interval $[0,\kappa]$ certified at the scale $s'$ has
$\kappa\le\kappa_{\mathrm{safe}}(s')$. The bound
$\kappa_{\mathrm{SiLU}}(s)$ uses $S=I/2$ only. Delays that switch
periodically between $0$ and their bounds along trapezoidal waveforms are
probed through the growth of the period map, estimated by power
iteration. The delay-independent M-matrix test \citep{zhang2003globally}
certifies every $\kappa<\kappa_{M}$, where $\rho\bigl(\lbar D_{c}^{-1}
(|A_{c}|+|B_{1,c}|+\kappa_{M}\sum_{q}|\hat B_{2,q}|)\bigr)=1$, $|\cdot|$
acts entrywise and $\hat B_{2,q}$ is $B_{2,q}$ at $\kappa=1$. Its Perron
vector is cell-periodic and certifies every network in $\mathcal{N}$.
Finally, $\gamma_{\mathrm{lb}}$ is the largest $\sigma_{\max}$ of the
inverse characteristic matrix at $\lambda=\mathrm{i}\omega$ over members
of chains and rings with constant delays and a grid of frequencies, and
$\gamma_{\mathrm{lb}}^{\mathrm{tv}}$ is the largest ratio $\|z\|/\|w\|$ on
$[0,60]$ for ring members with switching or constant delays, zero
history and sinusoidal inputs. Both bound the worst-case $L_{2}$ gain of
the class from below. Rounding the end slopes into the sector changes
$\gamma_{\mathrm{lb}}$ by less than $10^{-14}$ relative, and halving the
step changes the maximizing ratios of $\gamma_{\mathrm{lb}}^{\mathrm{tv}}$
by less than $10^{-7}$. Unlike the certificates, the outer and lower
bounds are computed in floating point.

\begin{figure*}[!t]
\centering
\incfig[\textwidth]{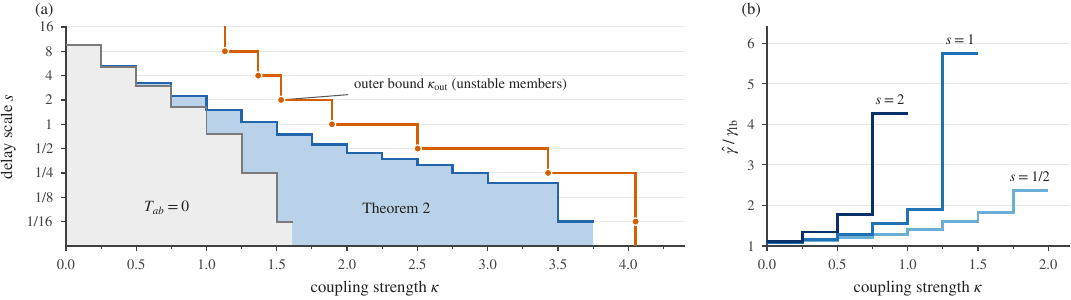}{5cm}{figures/fig\_region.pdf}
\caption{(a) Certified region in the $(\kappa,s)$ plane for
$(F_{f},F_{b})=(\kappa\hat G,-\kappa\hat G^{\tp})$. The shaded set is the
union of the exactly verified rectangles $[\kappa_{-},\kappa_{+}]\times[0,s]$
of Corollary~\ref{cor:region} and holds for every network in
$\mathcal{N}$, and its gray part is certified also for $T_{ab}=0$.
Markers give the couplings at which unstable members are found for
constant delays ($\kappa_{\mathrm{out}}$), and the outer staircase through
them bounds $\kappa_{\mathrm{safe}}$ from above
(Section~\ref{ssec:assess}). (b) Ratio of the certified gain $\hat\gamma$
to the lower bound $\gamma_{\mathrm{lb}}$ on the coupling cells of (a) for
three delay scales.}
\label{fig:region}
\end{figure*}

\subsection{Comparison and simulation protocol}
\label{ssec:protocol}
For a common delay $\tau_{1}=\tau_{2}=d(t)\in[0,\bar d]$, the delayed
terms of \eqref{eq:model} merge into $(B_{1}+B_{2})g(y(t-d(t)))$. For a
fixed $N$, Theorem~\ref{thm:network} is then applied with this single
channel, that is, with the blocks of the second channel removed from
$\xi$. Theorem~\ref{thm:uniform} covers the subclass through
$\taub=(\bar d,\bar d)$. For the same subclass, we implement Theorem~1 of
\citet{zhai2025hierarchical} at order one with the lower delay bound zero.
The implementation reproduces Example~1 of \citet{zhai2025hierarchical}.
With the lower bound one, it is feasible for the upper bound $4.28$ and
infeasible for $4.31$, in line with the reported bound $4.2964$.

Further type data are generated with seeded random draws (seeds
2026101301, 2026101302 and 2026101303). The diagonal
entries of $D_{t}$ are uniform in $[1,6/5)$, and $A_{t}$, $B_{1,t}$ and
the unscaled coupling block $G_{0}$ of Section~\ref{sec:results} have
standard Gaussian entries and are rescaled to the spectral norms of their
reference values, while $c$ is kept. A draw is accepted if
\eqref{eq:gram} is verified for the uncoupled modules ($\kappa=0$) at
$s=1$, and otherwise the next draw is taken.

Nonlinear simulations use the classical fourth-order Runge--Kutta method
with linear interpolation of delayed values and constant histories. Each
module $i$ has its own delays
$\tau_{k}^{(i)}(t)=\taub_{k}\bigl(5+3\sin(\omega_{ki}t+\varphi_{ki})\bigr)/8$
as in Remark~\ref{rem:modulewise}, with $\omega_{ki}$ and $\varphi_{ki}$
uniform in $[2,12]$ and $[0,2\pi)$. For zero history,
$\|y\|_{2}/\|w\|_{2}$ is evaluated on $[0,60]$. Each input component is
then a normalized sum of eight sinusoids with log-uniform frequencies in
$[0.05,5]$, windowed by $\sin^{2}(\pi t/20)$ on $[0,20]$ and scaled by
$a\in\{10^{-2},1,10\}$, or a sinusoid of amplitude $10^{-2}$ at the
frequency and direction where the frequency response of the member with
$S=I/2$ peaks. For constant inputs, the equilibrium of
Remark~\ref{rem:input} is computed by Newton's method and approached from
random constant histories. The step is $5\cdot 10^{-3}$, halved to check
the gain estimates, and $10^{-2}$ for the runs toward equilibria.

\section{Numerical results}
\label{sec:results}

\subsection{Data and settings}
\label{ssec:setup}
The type data and the coupling block are
\begin{equation*}
\begin{aligned}
D_{1}&=\diag(1,1.2), & D_{2}&=\diag(1.1,1),\\[3pt]
A_{1}&=\begin{bmatrix}0.1&0.3\\-0.3&0.1\end{bmatrix}, &
A_{2}&=\begin{bmatrix}0&0.3\\-0.4&0.1\end{bmatrix},\\[3pt]
B_{1,1}&=\begin{bmatrix}-0.25&0.05\\0&-0.2\end{bmatrix}, &
B_{1,2}&=\begin{bmatrix}-0.2&0\\0.05&-0.25\end{bmatrix},\\[3pt]
\hat G&=cG_{0}, & G_{0}&=\begin{bmatrix}0.6&0.2\\-0.1&0.5\end{bmatrix},
\end{aligned}
\end{equation*}
with $c=0.6918332448642871$, all taken as double-precision numbers. The
factor $c$ only sets the unit of $\kappa$ and places $\kappa=1$ at about
three times the M-matrix bound $\kappa_{M}$ of Section~\ref{ssec:assess}.
With
$(F_{f},F_{b})=(\kappa\hat G,-\kappa\hat G^{\tp})$, the matrix $B_{2}$ is
skew-symmetric for every network in $\mathcal{N}$, whereas the pattern
$(\hat G,-\hat G)$ has no such structure. The delay bounds are
$\taub=s\taub^{0}$ with $\taub^{0}=(1/4,1/2)$, and the sector is
$[-1/10,11/10]$ (Lemma~\ref{lem:silu}). The $L_{2}$ gain is bounded at
$s=1$ for three instances. I1 has $\kappa=1$, I2 the pattern
$(\hat G,-\hat G)$, and I3 every $\kappa\in[9/10,11/10]$, which
Corollary~\ref{cor:region} covers through \eqref{eq:gramL2} at the two
ends.

The largest delay scale at which the bisection below verifies
Theorem~\ref{thm:uniform} is denoted $s^{\star}(\kappa)$, and
$s^{\star}_{N}(\kappa)$ is its
counterpart for the per-$N$ certificates of Section~\ref{ssec:compute}
on open chains that start with type~1. Delay scales are bisected on
$[0,16]$, or on $[s^{\star},16]$ for $s^{\star}_{N}$, and delay bounds
on $[0,8]$, each with eight steps after the upper end, which resolves
$s^{\star}$ to $1/16$ and delay bounds to $1/32$. On the coupling axis,
\eqref{eq:gram} is solved at $\kappa_{i}=i/4$ for $s=1/16$ as long as it
is verified, and each pair $(\kappa_{i},\kappa_{i+1})$ is then solved with
common type variables (Corollary~\ref{cor:region}).

\subsection{Certified region and its tightness}
\label{ssec:region}
Fig.~\ref{fig:region}(a) shows the certified part of the $(\kappa,s)$ plane
as the union of the rectangles of Corollary~\ref{cor:region}, each
verified exactly for all networks in $\mathcal{N}$ with time-varying,
rate-free delays. Uncoupled
modules are certified up to the search limit $s=16$. As $\kappa$ grows,
$s^{\star}$ decreases from $77/8$ at $\kappa=1/4$ to $9/4$ at $\kappa=1$
and $1/16$ at $\kappa=15/4$.

With constant delays, unstable members are found at
$\kappa_{\mathrm{out}}(s)=4.052$, $3.430$, $2.502$ and $1.893$ for
$s=1/16$, $1/4$, $1/2$ and $1$. The staircase through these markers bounds
$\kappa_{\mathrm{safe}}(s)$ from above (Section~\ref{ssec:assess}) and
hence the certified set from outside. At these scales, the certified
coupling reaches $15/4$, $3$, $2$ and $3/2$, that is, 92\%, 87\%, 79\% and
79\% of $\kappa_{\mathrm{out}}(s)$ and thus at least these shares of
$\kappa_{\mathrm{safe}}(s)$, which the grid step $1/4$ makes conservative.
For $s=2$, $4$ and $8$, they fall to 65\%, 36\% and 22\%. The certificate is thus close to the
outer bound for short delays, and the gap widens as the delays grow.
Periodically switching delays produced no unstable member in the scanned
ranges, which extend to at least 97\% of $\kappa_{\mathrm{out}}(s)$ for
$s=1$, $2$ and $4$, so these waveforms do not account for the gap at
large $s$.

The linearization of the SiLU network at the origin ($S=I/2$) loses
stability only at $\kappa_{\mathrm{SiLU}}(s)=7.33$, $5.39$, $2.93$ and
$2.78$ for $s=1/4$, $1/2$, $4$ and $8$, beyond twice
$\kappa_{\mathrm{out}}(s)$, and nowhere in the scanned ranges for the
other scales. Stability at the origin is therefore a poor guide to the
coupling that the sector class admits. The delay-independent M-matrix
test certifies only $\kappa<\kappa_{M}=0.328$, less than one eleventh of
the coupling certified at $s=1/16$, and it enlarges the certified set
only for $s>83/16$, where the rectangles end at $\kappa=1/4$.

Without the difference terms ($T_{ab}=0$), the certified set shrinks to
the gray part of Fig.~\ref{fig:region}(a). The coupling certified at $s=1/16$ drops from
$15/4$ to $103/64$, and $s^{\star}(1)$ from $9/4$ to $13/8$. The loss
grows with $\kappa$, whereas both certificates reach the same coupling
on the grid at $s=4$ and $s=8$. The difference terms therefore matter
most for strong coupling.

\subsection{Independent delays}
\label{ssec:delays}
Fig.~\ref{fig:delay} shows, for $\kappa=1$, the largest certified bound
$\taub_{2}^{\star}$ on the transmission delay as a function of the bound
$\taub_{1}$ on the local delay. Each marker is an exactly verified
certificate of Theorem~\ref{thm:uniform} and covers the box between it
and the origin, since a delay bounded by $\taub'\le\taub$ is also bounded
by $\taub$. The two delays trade off mildly. As $\taub_{1}$ grows from
$0$ to $2$, $\taub_{2}^{\star}$ decreases from $45/32$ to $19/32$, so a
local delay bound eight times $\taub_{1}^{0}$ still admits 42\% of the
transmission delay bound certified for $\taub_{1}=0$. The difference
terms raise $\taub_{2}^{\star}$ by 37\% to 61\% across the whole
range. Because the two delays need not coincide,
the certified pairs, e.g., $(2,19/32)$ and $(0,45/32)$, reach well beyond
the common bound $\bar d^{\star}=15/16$ of Section~\ref{ssec:zhai}.

\subsection{Gain bounds}
\label{ssec:gains}
Table~\ref{tab:gain} lists the gain bounds at $s=1$.
Theorem~\ref{thm:uniform}(b) certifies $\hat\gamma=2.196$, $1.963$ and
$2.583$ for I1, I2 and I3, each for all networks in $\mathcal{N}$ and,
for I3, for every $\kappa\in[9/10,11/10]$. Since the worst-case gain of
the class is at least $\gamma_{\mathrm{lb}}^{\mathrm{tv}}$, $\hat\gamma$
exceeds it by at most 33\%, 29\% and 50\%. Without the difference terms,
$\hat\gamma$ is larger by factors of $3.1$, $4.0$ and $6.0$, whereas the
vertex variant of Remark~\ref{rem:K4} lowers it by only 4\% and 3\% for
I1 and I2.

\begin{figure}[!t]
\centering
\incfig[\columnwidth]{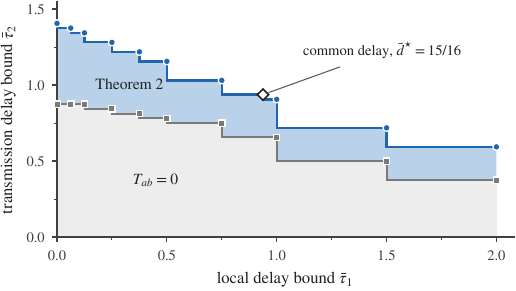}{4cm}{figures/fig\_delay.pdf}
\caption{Certified bounds $\taub_{1}$ on the local delay and $\taub_{2}$
on the transmission delay for $\kappa=1$. Each marker, including the
diamond for the common delay bound $\bar d^{\star}=15/16$ of
Section~\ref{ssec:zhai}, is an exactly verified certificate of
Theorem~\ref{thm:uniform} and covers the box between it and the origin.
The shaded set is the union of these boxes, and its gray part is certified
also for $T_{ab}=0$.}
\label{fig:delay}
\end{figure}

Fig.~\ref{fig:region}(b) compares $\hat\gamma$ with $\gamma_{\mathrm{lb}}$
on the coupling cells of Fig.~\ref{fig:region}(a), where each $\hat\gamma$
holds for the whole cell and for all smaller delay scales. For $\kappa\le 1/4$, $\hat\gamma$ exceeds
$\gamma_{\mathrm{lb}}$ by at most 11\% at all three delay scales. Toward the
boundary of the certified set, $\hat\gamma$ grows much faster than
$\gamma_{\mathrm{lb}}$, e.g., to $18.06$ against $3.14$ on $[5/4,3/2]$
at $s=1$.

\subsection{Price of uniformity and computational cost}
\label{ssec:uniformity}
\begin{figure*}[!t]
\centering
\incfig[\textwidth]{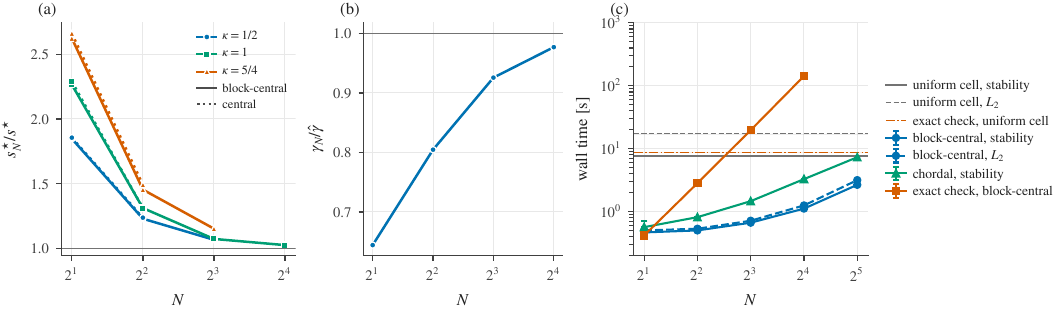}{5cm}{figures/fig\_scaling.pdf}
\caption{(a) Price of uniformity $s^{\star}_{N}/s^{\star}$ for
block-central and central certificates. (b) Block-central gain bound
$\gamma_{N}$ for I1 relative to $\hat\gamma$. (c) Wall times at
$\kappa=1$ and $s=1$, median and range of five runs. At $N=64$, the
construction of the chordal problem exceeds the address-space limit.}
\label{fig:scaling}
\end{figure*}

Fig.~\ref{fig:scaling}(a) compares $s^{\star}$ with $s^{\star}_{N}$. The
ratio $s^{\star}_{N}/s^{\star}$ is the price of certifying all networks
at once. It is largest for $N=2$, with $1.85$, $2.27$ and $2.62$ for
$\kappa=1/2$, $1$ and $5/4$, and decays quickly with $N$. For $\kappa=1$,
it is $1.31$ at $N=4$, $1.07$ at $N=8$ and $1.02$ at $N=16$, the last
within the bisection resolution, since $s^{\star}_{16}=2359/1024$ is below
the first unverified uniform scale $37/16$. The decay agrees with
Lemma~\ref{lem:cell}.
Only the end cells of an open chain receive the favourable term in
\eqref{eq:open}, whose influence fades as $N$ grows, and rings have no
such term. Central and block-central certificates, computed for $N=2$ and
$4$, differ by less than 3\%, so the module-local structure of the
variables costs little there, and the price of uniformity is confined to
short chains. The gain in
Fig.~\ref{fig:scaling}(b) follows the same trend, and the block-central
bound $\gamma_{N}$ for I1 reaches $0.977\hat\gamma$ at $N=16$.

The tiled certificates also satisfy the central and block-central
conditions for $N=2$, $4$, $8$ and $16$. For open chains with $N\le 9$ and both starts and for
rings with $N\le 10$, the assembled matrices have largest eigenvalues of
at most $-0.0040$ for I1 to I3, with I3 checked at five values of
$\kappa$. For the stability certificate at $\kappa=1$ and $s=9/4$, they
are at most $-12.06$. Both values lie below the guaranteed bounds
$-2\epsilon$ and $-(1+2\epsilon)$.

Fig.~\ref{fig:scaling}(c) reports wall times at $\kappa=1$ and $s=1$.
For every $N$, the cell problem has, besides the local conditions, 1298
scalar variables and one LMI of order 96 for stability and 1501 variables
and one LMI of order 104 for the gain. It takes 7.7\,s for stability,
17.3\,s for the minimization of $\gamma^{2}$ for I1 and 8.8\,s for the
exact check of the stability certificate. A block-central stability
solve is faster for a single network up to $N=32$, with 0.47 to 2.7\,s,
but it covers that network only, and its exact verification grows from
0.42\,s at $N=2$ to 143\,s at $N=16$. The
chordal variant takes 0.57 to 7.3\,s up to $N=32$ and fails at $N=64$,
where the problem construction exceeds the address-space limit.

\subsection{Common delay and comparison}
\label{ssec:zhai}
Table~\ref{tab:zhai} compares the largest certified common delay bound
$\bar d^{\star}$, where the values for one $N$ refer to open chains that
start with type~1. For $N=2$, the functional of
\citet{zhai2025hierarchical} certifies larger bounds than
Theorem~\ref{thm:network} with the merged channel, e.g., $11/2$ against
$25/8$ at $\kappa=1$, and the vertex variant of Remark~\ref{rem:K4}
reaches $137/32$. A certificate for one $N$ says nothing about longer
chains, however, and the bound of Theorem~\ref{thm:network} falls from
$25/8$ at $N=2$ to $7/4$ at $N=4$. The conditions of \citet{zhai2025hierarchical}
grow from 2544 scalar variables and an LMI of order 76 at $N=2$ to 155648
variables and order 608 at $N=16$, and for $N=4$ and $\kappa=1$ their
first solve did not finish within 1800\,s. Theorem~\ref{thm:uniform}
certifies $\bar d^{\star}=15/16$ at $\kappa=1$ for every $N$, both starts
and rings with a fixed problem size, and the same certificates admit
independent delays (Fig.~\ref{fig:delay}). The vertex variant of
Remark~\ref{rem:K4} raises this bound to $33/32$ with four LMIs of order
96 in place of one.

\begin{table}[!t]
\caption{$L_{2}$-gain bounds at $s=1$. Upper bounds are rounded up and lower
bounds down. The variant of Remark~\ref{rem:K4} was not computed for I3.}
\label{tab:gain}
\begin{tabular*}{\linewidth}{@{}l@{\extracolsep{\fill}}ccc@{}}
\toprule
 & I1 & I2 & I3\\
\midrule
$\hat\gamma^{2}$, Theorem~\ref{thm:uniform} & $241/50$ & $77/20$ & $667/100$\\
$\hat\gamma$, Theorem~\ref{thm:uniform} & $2.196$ & $1.963$ & $2.583$\\
$\hat\gamma$, Theorem~\ref{thm:uniform} with $T_{ab}=0$ & $6.849$ & $7.881$ & $15.621$\\
$\hat\gamma$, Remark~\ref{rem:K4} & $2.115$ & $1.908$ & --\\
$\gamma_{\mathrm{lb}}$ & $1.419$ & $1.346$ & $1.586$\\
$\gamma_{\mathrm{lb}}^{\mathrm{tv}}$ & $1.653$ & $1.525$ & $1.732$\\
$\hat\gamma/\gamma_{\mathrm{lb}}^{\mathrm{tv}}$ & $1.33$ & $1.29$ & $1.50$\\
\bottomrule
\end{tabular*}
\end{table}

\begin{table}[!t]
\caption{Certified common delay bound $\bar d^{\star}$ and problem size. Values for one $N$ are verified in double precision and values for
all $N$ exactly. The bound $8$ is the search limit. The variant of
Remark~\ref{rem:K4} for all $N$ has four LMIs of order 96.}
\label{tab:zhai}
\footnotesize\setlength{\tabcolsep}{0pt}
\begin{tabular*}{\linewidth}{@{}l@{\extracolsep{\fill}}cccccc@{}}
\toprule
 & \citet{zhai2025hierarchical} & \multicolumn{2}{c}{Thm.~\ref{thm:network}} & Rem.~\ref{rem:K4} & Thm.~\ref{thm:uniform} & Rem.~\ref{rem:K4}\\
$N$ & $2$ & $2$ & $4$ & $2$ & all & all\\
\midrule
$\kappa=1/2$ & $\ge 8$ & $21/4$ & $61/16$ & $231/32$ & $17/8$ & $81/32$\\
$\kappa=1$ & $11/2$ & $25/8$ & $7/4$ & $137/32$ & $15/16$ & $33/32$\\
$\kappa=5/4$ & $63/16$ & $39/16$ & $5/4$ & $53/16$ & $21/32$ & $11/16$\\
\midrule
variables & 2544 & 110 & 396 & 136 & 1298 & 4862\\
LMI order & 76 & 28 & 56 & 28 & 96 & 96\\
\bottomrule
\end{tabular*}
\end{table}

\subsection{Further data and simulations}
\label{ssec:further}
For the three further type-data sets, the first draw of each seed was
accepted. Their uniform delay scales $s^{\star}(1/2)$ and $s^{\star}(1)$
are $113/16$ and $3$, $7$ and $3$, and $89/16$ and $27/16$, with
$\hat\gamma=2.210$, $1.563$ and $5.469$ at $\kappa=1$ and $s=1$, all
verified exactly. Block-central certificates for $N=16$ exceed
$s^{\star}(1)$ by only 1.7\%, 3.4\% and 6.7\%, so the small price of
uniformity for long chains is not specific to the reference data.

Simulations at $\kappa=1$ and $s=1$ illustrate Remarks~\ref{rem:input}
and~\ref{rem:modulewise} with heterogeneous activations. An open chain of
$N=9$ modules that starts with type~2 and a ring of $N=10$ modules use
SiLU, $1.1\tanh$ and the piecewise-linear function with slopes $11/10$
and $-1/10$ in turn over the modules. For zero input, eight random
constant histories per network decay below $2.1\cdot 10^{-39}$ by
$t=100$. The largest gain
estimate over all inputs is $0.946$ for the chain and $0.947$ for the
ring, below $\hat\gamma=2.196$ of I1, and halving the step changes the
estimates by less than $10^{-7}$ relative to their values. For constant
inputs, chains with $N=8$ and $9$ SiLU modules were run with three random
inputs and 50 random histories each. Every trajectory reached the Newton
equilibrium within $2.6\cdot 10^{-14}$ by $t=400$, consistent with the
uniqueness and attractivity stated in Remark~\ref{rem:input}.

\section{Conclusion}
\label{sec:conclusion}
This paper has shown that stability and an $L_{2}$-gain bound of chains of
delayed neural modules can be certified for all lengths by conditions of
fixed size. Fixed integration limits free the analysis from the rates of
change of the delays, and module-local variables that repeat with the module
type reduce every chain and ring to the blocks of one cell. By the operator
Fej\'er--Riesz theorem, the reduction of the stability certificate to one
cell adds no conservatism of its own, apart from a margin. The certificates
cover all activations in the slope sector, module-wise delays
(Remark~\ref{rem:modulewise}) and constant inputs, and they extend to
certified regions of coupling strength and delay scale.

The numerical study shows that length independence costs little for long
chains, since certificates of the same functional computed for a single
length gain mainly for short chains. For short delays, the certified coupling lies close to the
outer bound set by unstable members of the model class, and for weak
coupling the certified gains lie close to lower bounds on the worst-case
gain. Tests that ignore the delay bounds or rely on the linearization at the
origin misjudge the admissible coupling by large factors. The sector terms
on activation differences are essential for strong coupling, since removing
them more than halves the coupling certified for short delays. Because the
cell problem has a fixed size and is verified exactly once for all lengths,
certification becomes one problem instead of a sequence of problems that
grow with the length.

\appendix
\section{Proofs}
\label{app:proofs}

\begin{proof}[Proof of Lemma~\ref{lem:vdot}]
The first two terms of \eqref{eq:lkf} have the derivative
$2(y^{\tp}P+g_{0}^{\tp}\Pi)\dot y=2\xi^{\tp}W\dot y$, and the third has
the derivative $\sum_{k}(y^{\tp}S_{k}y-\bar y_{k}^{\tp}S_{k}\bar y_{k})$.
The fourth term equals $\sum_{k}\taub_{k}\int_{t-\taub_{k}}^{t}
(s-t+\taub_{k})\dot y^{\tp}(s)R_{k}\dot y(s)\,ds$ and has the derivative
$\dot y^{\tp}\tilde R\dot y-\sum_{k}\taub_{k}\mathcal{J}_{k}$ with
$\mathcal{J}_{k}=\int_{t-\taub_{k}}^{t}\dot y^{\tp}R_{k}\dot y\,ds$. If
$\taub_{k}=0$, then $\mathcal{J}_{k}=0$ and $\Fc{k}\xi=0$. Otherwise, let
$\alpha=\tau_{k}(t)/\taub_{k}$ and $\col(\chi_{1},\chi_{2})=\Fc{k}\xi$.
For $\alpha\in(0,1)$, the Wirtinger inequality \citep{seuret2013wirtinger}
on $[t-\tau_{k}(t),t]$ and on $[t-\taub_{k},t-\tau_{k}(t)]$ gives
$\taub_{k}\mathcal{J}_{k}\ge\alpha^{-1}\chi_{1}^{\tp}\hat R_{k}\chi_{1}
+(1-\alpha)^{-1}\chi_{2}^{\tp}\hat R_{k}\chi_{2}$, and the reciprocally
convex bound \citep{park2011reciprocally} with $\Rc{k}\succeq 0$ gives
$\taub_{k}\mathcal{J}_{k}\ge\xi^{\tp}\Fc{k}^{\tp}\Rc{k}\Fc{k}\xi$. For
$\alpha\in\{0,1\}$, one of $\chi_{1},\chi_{2}$ vanishes, and the Wirtinger
inequality on the other interval gives the same bound. Only the value
$\tau_{k}(t)$ enters. Adding the terms of \eqref{eq:sector}, which are
nonnegative by (H1), gives \eqref{eq:vdot}. Substituting
$\dot y=\Ac\xi+w$ and $z=e_{y,0}\xi$ yields the two consequences.
\end{proof}

\begin{proof}[Proof of Theorem~\ref{thm:network}]
For $w=0$, the right-hand side $F(t,\psi)=-D\psi(0)+Ag(\psi(0))+
\sum_{k}B_{k}g(\psi(-\tau_{k}(t)))$ of \eqref{eq:model} is continuous in
$t$ by (H2) and globally Lipschitz in $\psi\in C([-h,0],\R^{n})$ with the
constant $L_{F}=\|D\|+\lbar(\|A\|+\|B_{1}\|+\|B_{2}\|)$, and
$F(t,0)=0$. Successive approximations give a unique global solution,
which is $C^{1}$ on $(0,\infty)$ and satisfies
$\|y_{t}\|_{\infty}\le e^{L_{F}t}\|\phi\|_{\infty}$ and
$|\dot y(t)|\le L_{F}\|y_{t}\|_{\infty}$ \citep{hale2013introduction}. With
an additive $w\in L_{2}$, the same argument gives an absolutely
continuous solution.

Choose $\lambda>0$ with $\Xibf\preceq-\lambda I$. For $t\ge h$, $y$ is
Lipschitz on $[t-h,t]$, so $V$ is absolutely continuous on $[h,\infty)$,
and Lemma~\ref{lem:vdot} gives $\dot V\le-\lambda|\xi|^{2}\le
-\lambda|y(t)|^{2}$ almost everywhere. By (H1), $g_{i}(r)/r\in[\lmin,\lmax]$
for $r\ne 0$, so $2\int_{0}^{y_{i}}g_{i}(s)\,ds\ge\lmin y_{i}^{2}$ and
$V(t)\ge y^{\tp}(P+\Pi\Lm)y\ge c|y(t)|^{2}$ with
$c=\lambda_{\min}(P+\Pi\Lm)>0$. The bounds on $[0,h]$ give
$V(h)\le\beta_{0}\|\phi\|_{\infty}^{2}$, where $\beta_{0}$ depends only on
$h$, $L_{F}$ and the variables. Hence $|y(t)|^{2}\le\beta_{0}
\|\phi\|_{\infty}^{2}/c$ for $t\ge h$, since $V$ is nonincreasing there, and together with the
bound on $[0,h]$ this proves the uniform bound. Moreover, $\int_{h}^{\infty}|y|^{2}\,dt\le V(h)/\lambda$,
and $y$ and $\dot y$ are bounded, so $|y|^{2}$ is uniformly continuous and
$y(t)\to 0$. An equilibrium is a constant solution and is therefore zero.

The leading block of $\tilde\Xibf(\gamma)\prec 0$ gives
$\Xibf\prec-e_{y,0}^{\tp}e_{y,0}\preceq 0$. For zero history, $y$ vanishes
on $[-h,0]$ and is absolutely continuous on $[-h,\infty)$, and
$\dot y=\Ac\xi+w$ is locally square integrable, so $V(0)=0$ and
Lemma~\ref{lem:vdot} holds for almost every $t\ge 0$. Then
$\dot V+|z|^{2}-\gamma^{2}|w|^{2}\le 0$, and integration over $[0,t_{1}]$
with $V(t_{1})\ge 0$ gives $\int_{0}^{t_{1}}|z|^{2}\,dt\le\gamma^{2}
\int_{0}^{t_{1}}|w|^{2}\,dt$ for every $t_{1}>0$.
\end{proof}

\begin{proof}[Proof of Lemma~\ref{lem:cell}]
With module-local variables, the three terms of \eqref{eq:vdot} split
into terms of single modules. Hence
$\xi^{\tp}\Xibf\xi=\sum_{j}(\zeta_{j}^{\tp}L_{c}\zeta_{j}+2\zeta_{j}^{\tp}
W_{c}\dot y^{[j]}+\dot y^{[j]\tp}\tilde R_{c}\dot y^{[j]})$ with
$\dot y^{[j]}=a_{0}\zeta_{j}+a_{+}\zeta_{j+1}+a_{-}\zeta_{j-1}$, where
$\zeta_{0}$ and $\zeta_{K+1}$ are absent for an open chain and indices
are taken modulo $K$ for a ring. Collecting terms gives $\Xibf_{0}$ in the
diagonal blocks and $\Xibf_{1}$ in the blocks $(j,j+1)$. A block
$(j-1,j+1)$ could only arise from $a_{-}^{\tp}\tilde R_{c}a_{+}$, which
vanishes because $a_{-}$ has nonzero rows only in the first module of a
cell, $a_{+}$ only in the second, and $\tilde R_{c}$ is block diagonal
over the modules. In an open chain, the first and the last cell lack the
terms $a_{+}^{\tp}\tilde R_{c}a_{+}$ and $a_{-}^{\tp}\tilde R_{c}a_{-}$
that the absent cells $0$ and $K+1$ would contribute, which gives
\eqref{eq:open}. The input $w^{[j]}$ enters only $\dot y^{[j]}$, which
gives the blocks of $\tilde\Xibf_{0}$ and $\tilde\Xibf_{1}$.
\end{proof}

\begin{proof}[Proof of Theorem~\ref{thm:uniform}]
(a) By \eqref{eq:local}, (C1) and (C2) hold for every network in
$\mathcal{N}$. Let $H_{0}=-\Xibf_{0}-I$ and $H_{1}=-\Xibf_{1}$. Then
\eqref{eq:gram} implies $Y\succeq\epsilon I$ and $H_{0}-Y\succeq\epsilon I$.
With $E_{j}$ the selector of the cell pair $(j,j+1)$,
\begin{equation*}
\begin{aligned}
\Tc_{K}(H_{0},H_{1})={}&\textstyle\sum_{j=1}^{K-1}E_{j}^{\tp}ME_{j}\\
&+J_{1}^{\tp}(H_{0}-Y)J_{1}+J_{K}^{\tp}YJ_{K}\succeq 2\epsilon I,
\end{aligned}
\end{equation*}
since every cell appears in exactly two terms (for $K=1$ the sum is
empty). By Lemma~\ref{lem:cell} and $\tilde R_{c}\succeq 0$,
$-\Xibf_{N}-I\succeq\Tc_{K}(H_{0},H_{1})\succeq 2\epsilon I$ for the open
chains with $N=2K$ modules that start with type 1. For a ring,
$\mathcal{C}_{K}(H_{0},H_{1})=\sum_{j=1}^{K}E_{j}^{\tp}ME_{j}\succeq
2\epsilon I$ with indices modulo $K$. Every other open chain is a
consecutive part of an open chain with $N_{b}=N+2$ (even $N$) or
$N_{b}=N+3$ (odd $N$) modules that starts with type 1 and contains both
neighbours of the part. Let $E$ select the coordinates of the part. The
local terms and $\Sym(W\Ac)$ restrict exactly, because the variables are
module-local and every neighbour pair carries $(F_{f},F_{b})$. Splitting
$\Ac^{\tp}\tilde R\Ac$ into the rows of the part and of its two
neighbours gives $\Xibf_{\mathrm{part}}=E^{\tp}\Xibf_{N_{b}}E-A_{r}^{\tp}
\tilde R_{r}A_{r}\preceq-(1+2\epsilon)I$, where $A_{r}$ collects the
neighbour rows of $\Ac E$ and $\tilde R_{r}\succeq 0$ their weights.
(b) The same steps with $\tilde H_{0}=-\tilde\Xibf_{0}$ and
$\tilde H_{1}=-\tilde\Xibf_{1}$ give $\tilde\Xibf_{N}(\gamma)\preceq
-2\epsilon I$, since $w$ enters only the rows of its own module.
\end{proof}

\begin{proof}[Proof of Lemma~\ref{lem:exact}]
(i)$\Rightarrow$(ii): Let $|z|=1$, $v\in\mathbb{C}^{48}$ and $K\ge 4$,
and let $x$ have the cell components $x_{1}=x_{K}=0$ and $x_{j}=z^{j}v$
for $1<j<K$. Then $A_{\mathrm{out}}x=0$, and (i), which also holds for
complex vectors, gives $0\le x^{*}\Tc_{K}(H_{0},H_{1})x=(K-3)v^{*}H(z)v
+v^{*}H_{0}v$ with $H(z)=H_{0}+zH_{1}+\bar zH_{1}^{\tp}$. Dividing by
$K-3$ and letting $K\to\infty$ gives $v^{*}H(z)v\ge 0$.
(ii)$\Rightarrow$(iii): By the operator Fej\'er--Riesz theorem
\citep{dritschel2010operator}, $H(z)=P(z)^{*}P(z)$ on $|z|=1$ for some
$P(z)=P_{0}+zP_{1}$ with $P_{0},P_{1}\in\mathbb{C}^{48\times 48}$.
Comparing coefficients gives $H_{0}=P_{0}^{*}P_{0}+P_{1}^{*}P_{1}$ and
$H_{1}=P_{0}^{*}P_{1}$. Hence the Gram matrix $[P_{0}\ P_{1}]^{*}
[P_{0}\ P_{1}]\succeq 0$ has the diagonal blocks $P_{0}^{*}P_{0}$ and
$H_{0}-P_{0}^{*}P_{0}$ and the off-diagonal block $H_{1}$. Its entrywise
real part is positive semidefinite as well and, since $H_{0}$ and $H_{1}$
are real, yields (iii) with $Y=\operatorname{Re}(P_{0}^{*}P_{0})$.
(iii)$\Rightarrow$(i): This is the telescoping step in the proof of
Theorem~\ref{thm:uniform} with $\epsilon=0$.
\end{proof}

\bibliographystyle{abbrvnat}
\bibliography{references}

\end{document}